\newcommand{\X}{X}
\newcommand{\y}{\mathbf{y}}
\newcommand{\f}{\mathbf{f}}
\newcommand{\m}{\mathbf{m}}
\renewcommand{\u}{\mathbf{u}}
\renewcommand{\S}{\mathbf{S}}
\newcommand{\reals}{\mathbb{R}}
\newcommand{\E}{\mathbb{E}}
\newcommand{\eqc}{\overset{c}{=}}
\newtheorem{rem}{Remark}
\title{The Promises and Pitfalls of Deep Kernel Learning}
\author[1]{\href{mailto:<swo25@cam.ac.uk>}{Sebastian~W.~Ober}{}} 
\author[1,2]{\href{mailto:<cer54@cam.ac.uk>}{Carl~E.~Rasmussen}{}}
\author[3]{\href{mailto:<m.vdwilk@imperial.ac.uk>}{Mark~van~der~Wilk}{}}
\affil[1]{%
        Department of Engineering\\
        University of Cambridge\\
        Cambridge, United Kingdom
}
\affil[2]{%
    Secondmind.ai\\
    Cambridge, United Kingdom
}
\affil[3]{%
    Department of Computing\\
    Imperial College London\\
    London, United Kingdom
}
\begin{document}
\maketitle

\begin{abstract}
Deep kernel learning (DKL) and related techniques aim to combine the representational power of neural networks with the reliable uncertainty estimates of Gaussian processes. One crucial aspect of these models is an expectation that, because they are treated as Gaussian process models optimized using the marginal likelihood, they are protected from overfitting. However, we identify situations where this is not the case. We explore this behavior, explain its origins and consider how it applies to real datasets. Through careful experimentation on the UCI, CIFAR-10, and the UTKFace datasets, we find that the overfitting from overparameterized maximum marginal likelihood, in which the model is ``somewhat Bayesian'', can in certain scenarios be worse than that from not being Bayesian at all. We explain how and when DKL can still be successful by investigating optimization dynamics. We also find that failures of DKL can be rectified by a fully Bayesian treatment, which leads to the desired performance improvements over standard neural networks and Gaussian processes.
\end{abstract}

\section{Introduction}
\label{introduction}
\begin{figure*}
    \centering
    \begin{subfigure}[b]{0.245\textwidth}
    \centering
    \centerline{\includegraphics[width=\textwidth]{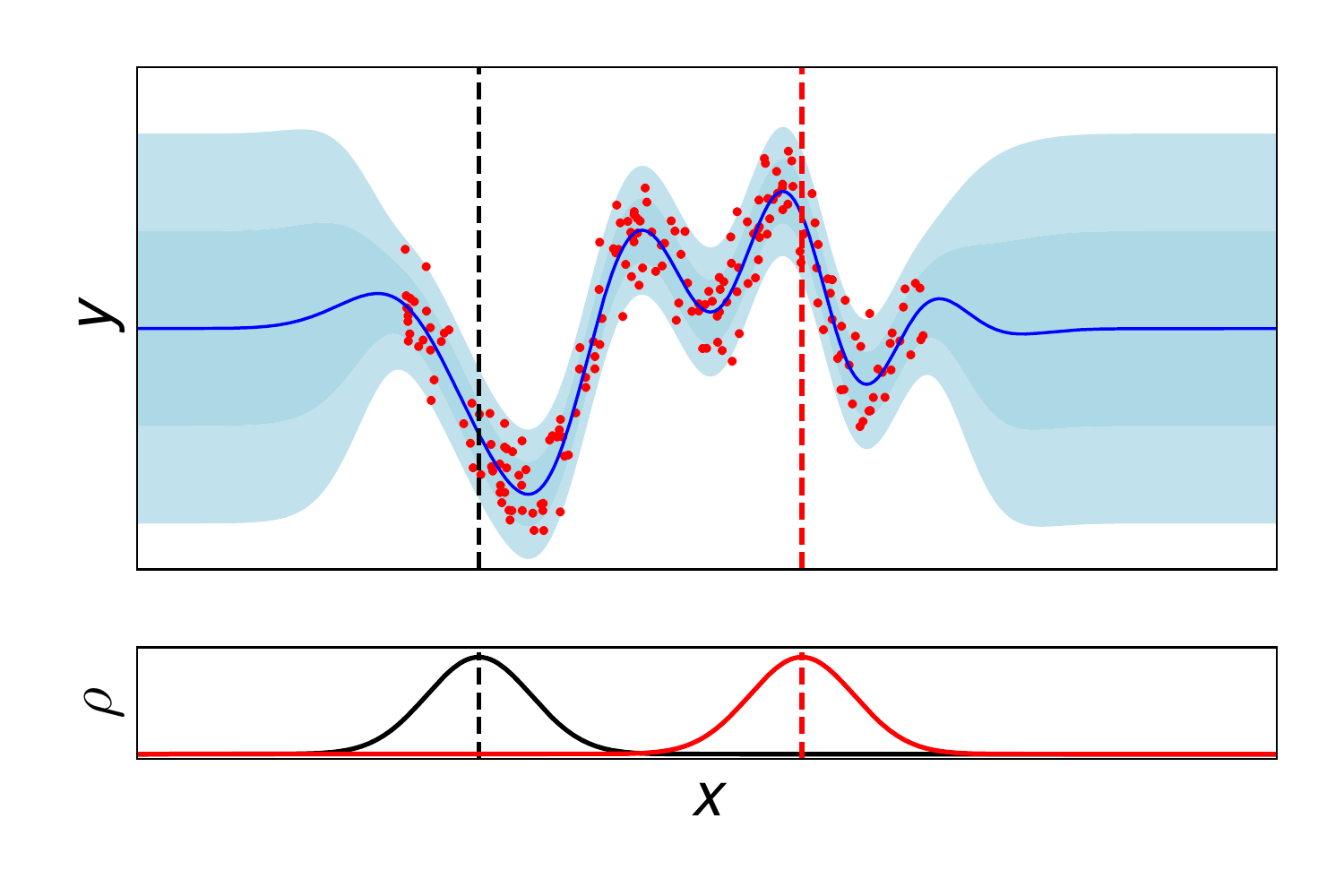}}
    \caption{SE kernel}
    \label{fig:toy_SE_fit}
    
    \end{subfigure}
    \hfill
    \begin{subfigure}[b]{0.245\textwidth}
    \centering
    \centerline{\includegraphics[width=\textwidth]{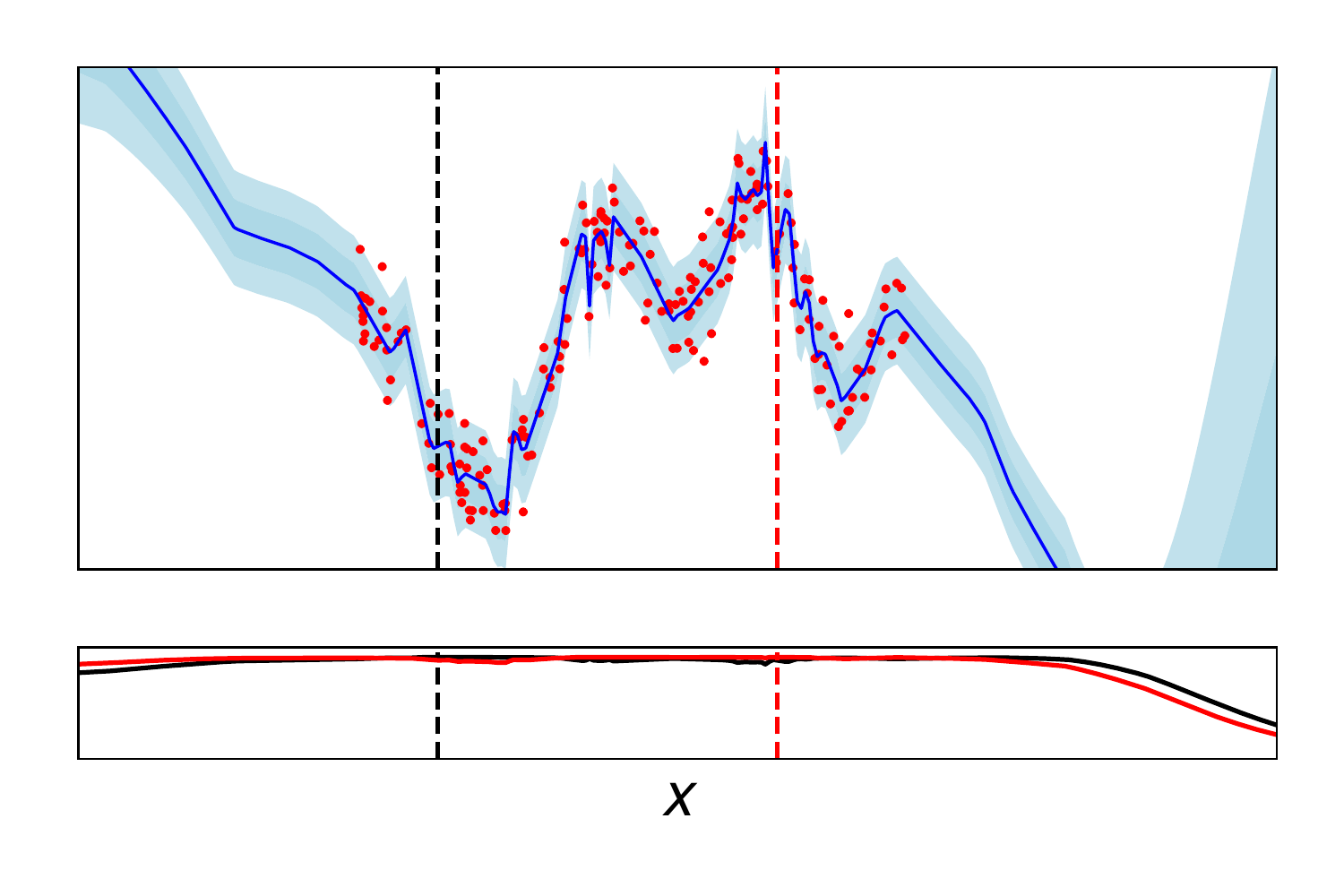}}
    \caption{Exact DKL kernel}
    \label{fig:toy_DKL_fit}
    
    \end{subfigure}
    \hfill
    \begin{subfigure}[b]{0.245\textwidth}
    \centering
    \centerline{\includegraphics[width=\textwidth]{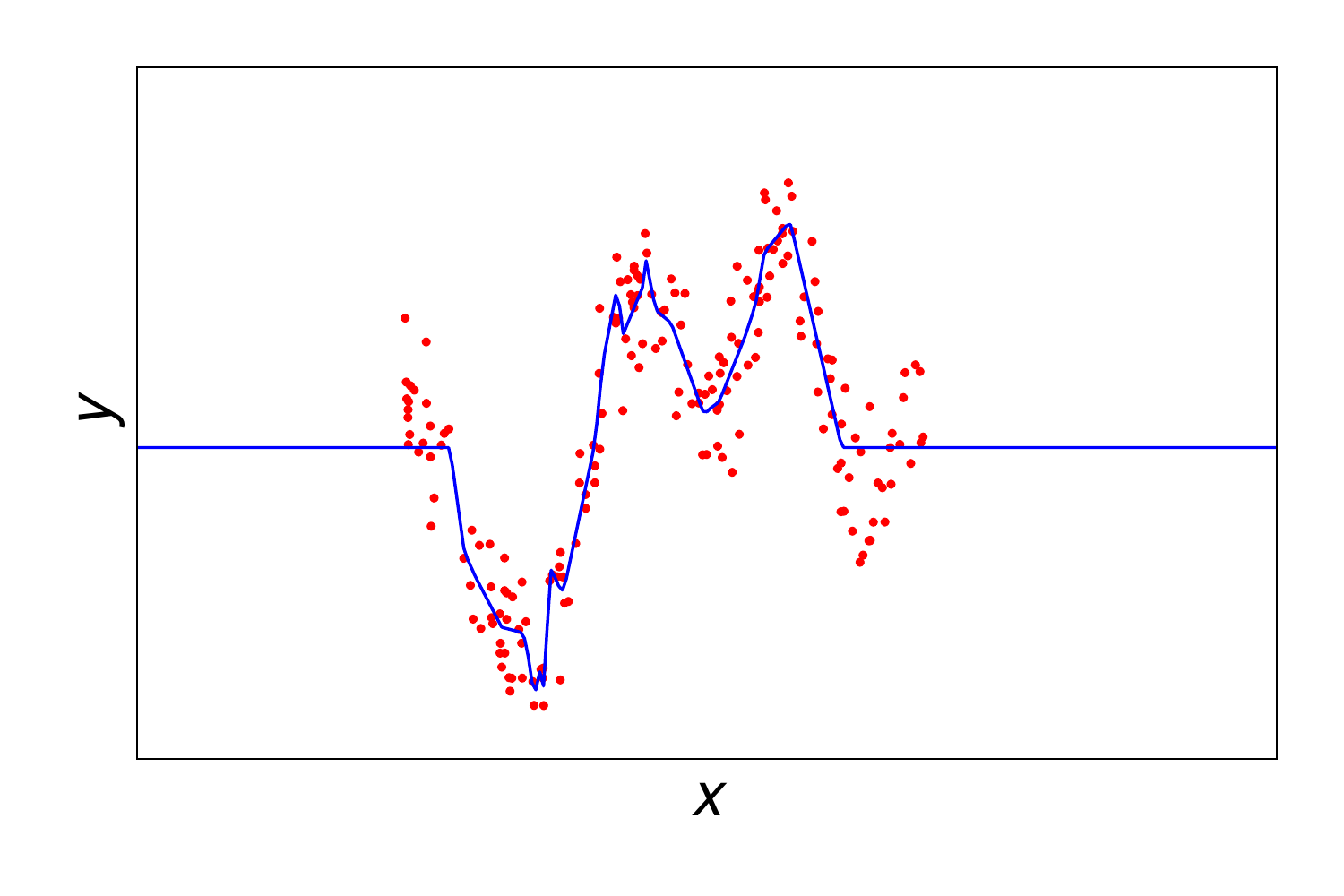}}
    \caption{Neural network fit}
    \label{fig:toy_NN_fit}
    
    \end{subfigure}
    \hfill
    \begin{subfigure}[b]{0.245\textwidth}
    \centering
    \centerline{\includegraphics[width=\textwidth]{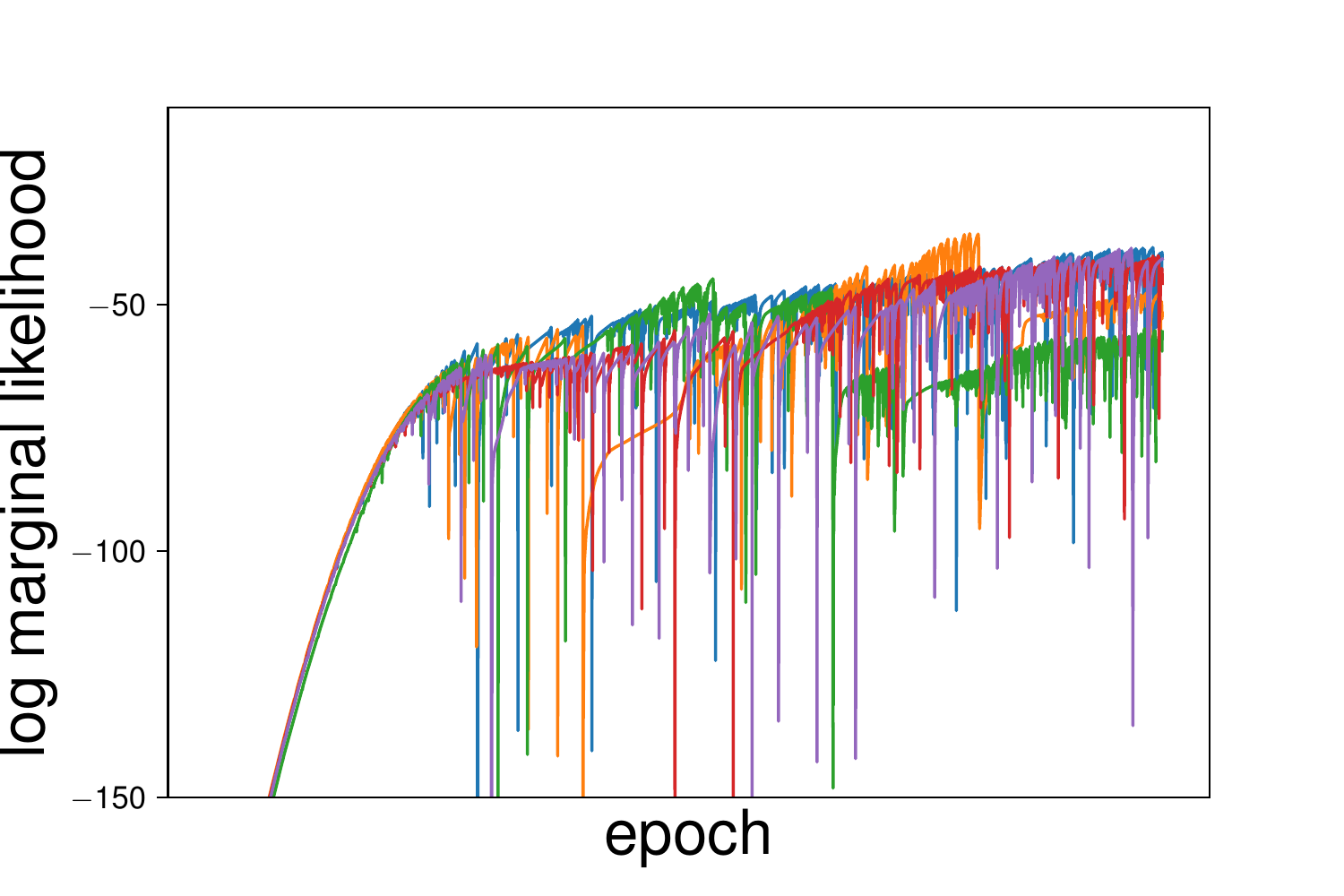}}
    \caption{LML training curves}
    \label{fig:toy_DKL_traincurves}
    
    \end{subfigure}
    \caption{Results on toy 1D dataset. Plots (a) and (b) show the predictive posterior for squared exponential (SE) and deep kernel learning (DKL) kernels, respectively; below each plot we also plot correlation functions $\rho_{x'}(x) = k(x, x')/\sigma_f^2$ at two points $x'$ given by the vertical dashed lines. (c) shows the fit given by the neural network analogous to the DKL model. Finally, (d) shows training curves of the log marginal likelihood (LML) for 5 different initializations of DKL.}
    \label{fig:toy_problem}

\end{figure*}

Gaussian process (GP) models \citep{rasmussen2006gaussian} are popular choices for Bayesian modeling due to their interpretable nature and reliable uncertainty estimates. 
These models typically involve only a handful of kernel hyperparameters, which are optimized with respect to the marginal likelihood in an empirical Bayes, or type-II maximum likelihood, approach \citep{berger1985marglik,rasmussen2006gaussian,murphy2012machine}.
However, most popular kernels can only adjust a degree of smoothing, rather than learn sophisticated representations from the data that might aid predictions.
This greatly limits the applicability of GPs to high-dimensional and structured data such as images.\looseness=-1

Deep neural networks \citep{lecun2015deep}, on the other hand, are known to learn powerful representations which are then used to make predictions on unseen test inputs. 
While deterministic neural networks have achieved state-of-the-art performance throughout supervised learning and beyond, they suffer from overconfident predictions \citep{guo2017calibration}, and do not provide reliable uncertainty estimates.
The Bayesian treatment of neural networks attempts to address these issues; however, despite recent advances in variational inference and sampling methods for Bayesian neural networks (BNNs; e.g. \citet{dusenberry2020efficient, zhang2019cyclical}), inference in BNNs remains difficult due to complex underlying posteriors and the large number of parameters in modern BNNs.
Moreover, BNNs generally require multiple forward passes to obtain multiple samples of the predictive posterior to average over.\looseness=-1

It is natural, therefore, to try to combine the uncertainty-representation advantages of GPs with the representation-learning advantages of neural networks, and thus obtain the ``best of both worlds.'' 
Ideally, such an approach would achieve the desiderata of a Bayesian model: training without overfitting, good uncertainty representation, and the ability to learn hyperparameters without using a validation set.
In this paper, we focus on a line of work that tries to achieve these called \textit{deep kernel learning} (DKL) \citep{calandra2016manifold, wilson2016deep, wilson2016stochastic}.
These works use a neural network to map inputs to points in an intermediate feature space, which is then used as the input space for a GP.
The network parameters can be treated as hyperparameters of the kernel, and thus are optimized with respect to the (log) marginal likelihood, as in standard GP inference.
This leads to an end-to-end training scheme that results in a model that hopefully benefits from the representational power of neural networks while also enjoying the benefits of reliable uncertainty estimation from the GP.
Moreover, as the feature extraction done by the neural network is deterministic, inference only requires one forward pass of the neural net, unlike fully Bayesian BNNs.
Previous works have shown that these methods can be used successfully \citep{calandra2016manifold, wilson2016deep, wilson2016stochastic, bradshaw2017adversarial}.\looseness=-1

We investigate to what extent DKL is actually able to achieve flexibility and good uncertainty, and what makes it successful in practice: for DKL to be useful from a Bayesian perspective, a higher marginal likelihood should lead to better performance.
In particular, it is often claimed that optimizing the marginal likelihood will automatically calibrate the complexity of the model, preventing overfitting.
For instance, \citet{wilson2016deep} states ``the information capacity of our model grows with the amount of available data, but its complexity is automatically calibrated through the marginal likelihood of the Gaussian process, without the need for regularization or cross-validation.''
This claim is based on the common decomposition of the log marginal likelihood into ``data fit'' and ``complexity penalty'' terms \citep{rasmussen2006gaussian}, which leads to the belief that a better marginal likelihood will result in better test performance.\looseness=-1

This is generally true when selecting a small number of hyperparameters. However, in models like DKL with many hyperparameters, we show that marginal likelihood training can encourage overfitting that is \emph{worse} than that from a standard, deterministic neural network.
This is because the marginal likelihood tries to correlate \emph{all} the datapoints, rather than just those for which correlations will be important.
As most standard GP models typically only have a few hyperparameters, this sort of overfitting is not usually an issue, but when many hyperparameters are involved, as in DKL, they can give the model the flexibility to overfit in this way.
As such, our work has implications for all GP methods which use highly parameterized kernels, as well as methods that optimize more than a handful of model parameters according to the marginal likelihood or ELBO.\looseness=-1

In this work, we make the following claims:
\begin{itemize}
\itemsep0em
    \item Using the marginal likelihood can lead to overfitting for DKL models.
    \item This overfitting can actually be worse than the overfitting observed using standard maximum likelihood approaches for neural networks.
    \item The marginal likelihood overfits by overcorrelating the datapoints, as it tries to correlate all the data, not just the points that should be correlated.
    \item Stochastic minibatching can mitigate this overfitting, and helps DKL to work in practice.
    \item A fully Bayesian treatment of deep kernel learning can avoid overfitting and obtain the benefits of both neural networks and Gaussian processes.
\end{itemize}
We note that some works have discussed that overfitting can be an issue for Gaussian processes trained with the marginal likelihood \citep{rasmussen2006gaussian,cawley2010over,lalchand2020approximate}, and \citet{calandra2016manifold} mentions that overfitting can be an issue for their DKL model. 
We additionally explain the undesirable behavior that DKL methods can exhibit, and the mechanism with which the marginal likelihood overfits.\looseness=-1

\section{Related Work}
\label{sec:related}
Full Bayesian inference in deep models can provide useful uncertainty estimates \citep[e.g.][]{blundell2015bbb,dusenberry2020efficient,osawa2019practical} and reduce both overfitting and the need to tune hyperparameters on a validation set \citep[e.g.][]{salimbeni2017doubly,ober2020global,immer2021}. However, these methods are more costly than plain DNNs. Interest remains in models that perform interest only in the final layer, as these provide uncertainty estimates in a single forward pass.

\citet{salakhutdinov2007using} first used deep belief networks to pretrain a neural network feature extractor to transform the inputs to a GP, followed by fine-tuning using the marginal likelihood.
\citet{calandra2016manifold} removed the deep belief network pretraining and only used the marginal likelihood to train the model.
\citet{wilson2016deep} improved the scalability of this model by using KISS-GP \citep{wilson2015kernel}, referring to the result as ``deep kernel learning''.
This was further extended to non-regression likelihoods and multiple outputs in \citet{wilson2016stochastic} by using stochastic variational inference \citep{hensman2015scalable}, resulting in stochastic variational deep kernel learning (SVDKL).
One of the most popular models in recent years has been the ``neural linear'' model \citep{riquelme2018deep, ober2019benchmarking}, which can be viewed as DKL with a linear kernel, or equivalently, Bayesian inference over the last layer of a neural network.\looseness=-1
These approaches, which use the marginal likelihood to optimize the neural network parameters, have been shown to be advantageous in multiple situations, including transfer testing and adversarial robustness \citep{bradshaw2017adversarial}.

However, \citet{tran2019calibrating} showed that these models can be poorly calibrated, and proposed Monte Carlo dropout \citep{gal2016dropout} to perform approximate Bayesian inference over the neural network weights in the model to fix this.
In addition, \citet{ober2019benchmarking} showed that it is difficult to get the neural linear model to perform well for regression without considerable hyperparameter tuning, and that fully Bayesian approaches for BNNs often require much less tuning to obtain comparable results.
Recent approaches (e.g. \citet{liu2020simple, van2021improving}) carefully regularize the neural network to mitigate these issues, but do still require tuning some hyperparameters on a validation set.\looseness=-1

\section{Background}
\label{sec:background}
\subsection{Gaussian Processes}

A Gaussian process (GP) can be seen as a distribution on functions with the property that every finite set of values is distributed according to a multivariate normal. 
In regression, we have pairs of inputs $\X = (x_1, \dots, x_N)^T$, $x_n \in \reals^D$ and outputs $\y = (y_1, \dots, y_N)^T$, $y_n \in \reals$, which we model as\looseness=-1
\begin{equation}
    \label{eq:gp-regression}
    y_n = f(x_n) + \epsilon_n, \; \epsilon_n \sim \mathcal{N}(0, \sigma_n^2),
\end{equation}
where $f$ has a GP prior, $f \sim \mathcal{GP}(m, k)$. 
The mean function $m:\reals^D\rightarrow\reals$ and positive semi-definite covariance function (or kernel) $k:\reals^D\times\reals^D\rightarrow\reals$ define the GP prior. We use a zero mean function throughout. The function values at a set of locations $\X$ are distributed according to $\mathcal{N}(0, K)$, where we define the kernel matrix $K \coloneqq K(\X, \X)$ to have $K_{ij} = k(x_i, x_j)$. 
Predictions of the latent function for a collection of test points $X_*$ can be computed in closed form:\looseness=-1
\begin{align}
    &\f_*|X, \y, X_* \sim \mathcal{N}(\mu_*, \Sigma_*), \textrm{ where} \\
    \nonumber \mu_* &= K(X_*, X)(K + \sigma_n^2 I_N)^{-1}\y, \\  
    \nonumber \Sigma_* &= K(X_*, X_*) - K(X_*, X)(K + \sigma_n^2 I_N)^{-1}K(X, X_*).
\end{align}
Finally, it is typical for the kernel to have a number of hyperparameters which are learned along with the noise variance $\sigma_n^2$ by maximizing the (log) marginal likelihood (LML, or model evidence), in an empirical Bayes, or type-II maximum likelihood approach:\looseness=-1
\begin{align}
    \log p(\y) &= \log \mathcal{N}(\y | \mathbf{0}, K + \sigma_n^2 I_N) \label{eq:compdf} \\
    \nonumber &\eqc -\underbrace{\frac{1}{2}\log |K + \sigma_n^2 I_N|}_{\text{(a) complexity}} - \underbrace{\frac{1}{2} \y^T(K + \sigma_n^2 I_N)^{-1}\y}_\text{(b) data fit},
\end{align}
We note that (a) and (b) are often referred to as the ``complexity penalty'' and ``data fit'' terms, respectively \citep{rasmussen2006gaussian}. 
For the purposes of this work, we use the automatic relevance determination (ARD) squared-exponential (SE) kernel, $k(x, x') = \sigma_f^2 \exp(-\frac{1}{2}\sum_{d=1}^D (x_d - x_d')^2/l_d^2)$.
Therefore, the hyperparameters to tune are the noise variance, $\sigma_n^2$, signal variance, $\sigma_f^2$, and lengthscales $l_d^2$.\looseness=-1

\subsection{Deep Kernel Learning}
One of the central critiques of GP regression is that it does not actually learn representations of the data. 
In an attempt to address this, several works \citep{calandra2016manifold, wilson2016deep, wilson2016stochastic, bradshaw2017adversarial} have proposed variants of deep kernel learning (DKL), which maps the inputs $x_n$ to intermediate values $v_n \in \reals^Q$ through a neural network $g_\phi(\cdot)$ parameterized by weights and biases $\phi$. 
These intermediate values are then used as inputs to the standard kernel resulting in the effective kernel $k_{DKL}(x, x') = k(g_\phi(x), g_\phi(x'))$. 
In order to learn the network weights and learn representations of the data, it was proposed to maximize the marginal likelihood with respect to the weights $\phi$ along with the kernel hyperparameters. 
We denote all the hyperparameters by $\theta \coloneqq \{\phi, \sigma_n, \sigma_f, \{l_q\}_{q=1}^Q\}$.\looseness=-1

Straightforward DKL suffers from two major drawbacks. 
First, the $\mathcal{O}(N^3)$ computational cost of GPs causes poor scalability in the number of data.\footnote{We note that \citet{wilson2016deep}, which first used the name ``DKL'', improved scalability using KISS-GP \citep{wilson2015kernel}; however, we use ``DKL'' when exact GP inference is used.}
Second, exact inference is only possible for Gaussian likelihoods, and therefore approximate techniques must be used for classification.
To achieve both, we follow \citet{bradshaw2017adversarial} in using stochastic variational inference (SVI) for GPs \citep{hensman2015scalable}, to result in stochastic variational DKL (SVDKL).\footnote{We note again that this is slightly different in exact implementation to the SVDKL model proposed in \citet{wilson2016stochastic}.}\looseness=-1

Considering the case of $C$ multiple outputs, we first introduce $M$ latent inducing variables $\u_c = (u_{c1}, \dots, u_{cM})^T$, indexed by $M$ inducing inputs $z_m \in \reals^Q$, which lie in the feature space at the output of the neural network.
We assume the standard variational posterior over the inducing variables, $q(\u_c) = \mathcal{N}(\m_c, \S_c)$, leading to an approximate posterior $q(\f, \u) = p(\f|\u)q(\u)$. 
We optimize the variational parameters $\m_c$ and $\S_c$, along with the model hyperparameters $\theta$, jointly by maximizing the evidence lower bound (ELBO):\looseness=-1
\begin{align}
\label{eq:ELBO}
    \mathcal{L} = \E_{q(\u)p(\f|\u)}[\log p(\y|\f)] - \mathrm{D}_{KL}(q(\u)||p(\u)).
\end{align}
Note that there are no restrictions on the likelihood $p(\y|\f)$ as the first term can be estimated using Monte Carlo sampling with the reparameterization trick \citep{kingma2013auto, rezende2014stochastic}. For Gaussian likelihoods and bounded inputs, theoretical results show that the ELBO can be made ``tight'' enough so it can be used as a stand-in for the marginal likelihood for hyperparameter optimization \citep{burt2020convergence}, if enough inducing points are given. Empirically, this has been shown to be the case for non-Gaussian likelihoods as well \citep{hensman2015scalable}.\looseness=-1

\section{Behavior in a Toy Problem}
\label{sec:toy}
\begin{figure}[t]
    \centering
    \begin{subfigure}[b]{0.23\textwidth}
    \centering
    \centerline{\includegraphics[width=\textwidth]{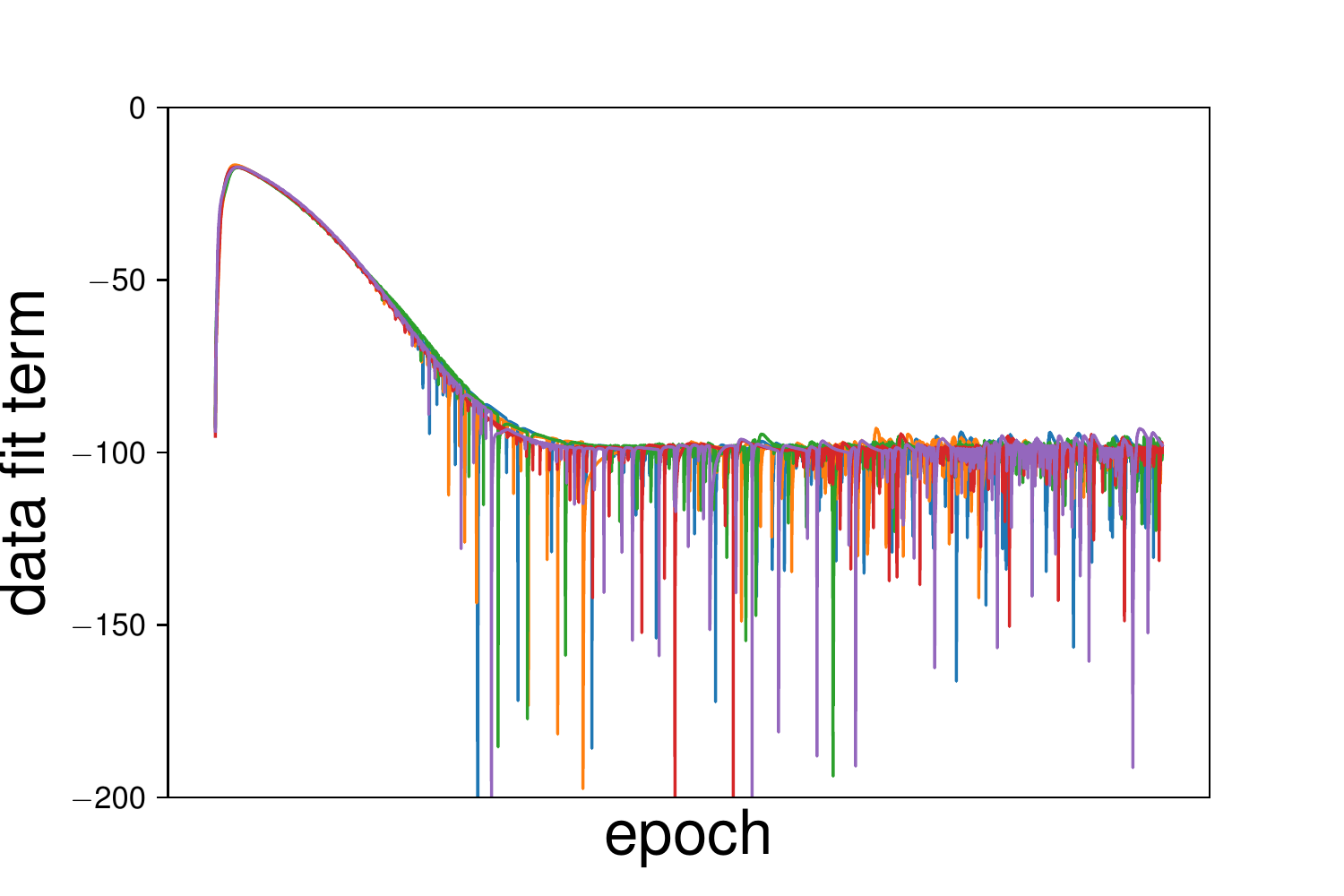}}
    \caption{Data fit}
    \label{fig:toy_fit_term}
    
    \end{subfigure}
    \hfill
    \begin{subfigure}[b]{0.23\textwidth}
    \centering
    \centerline{\includegraphics[width=\textwidth]{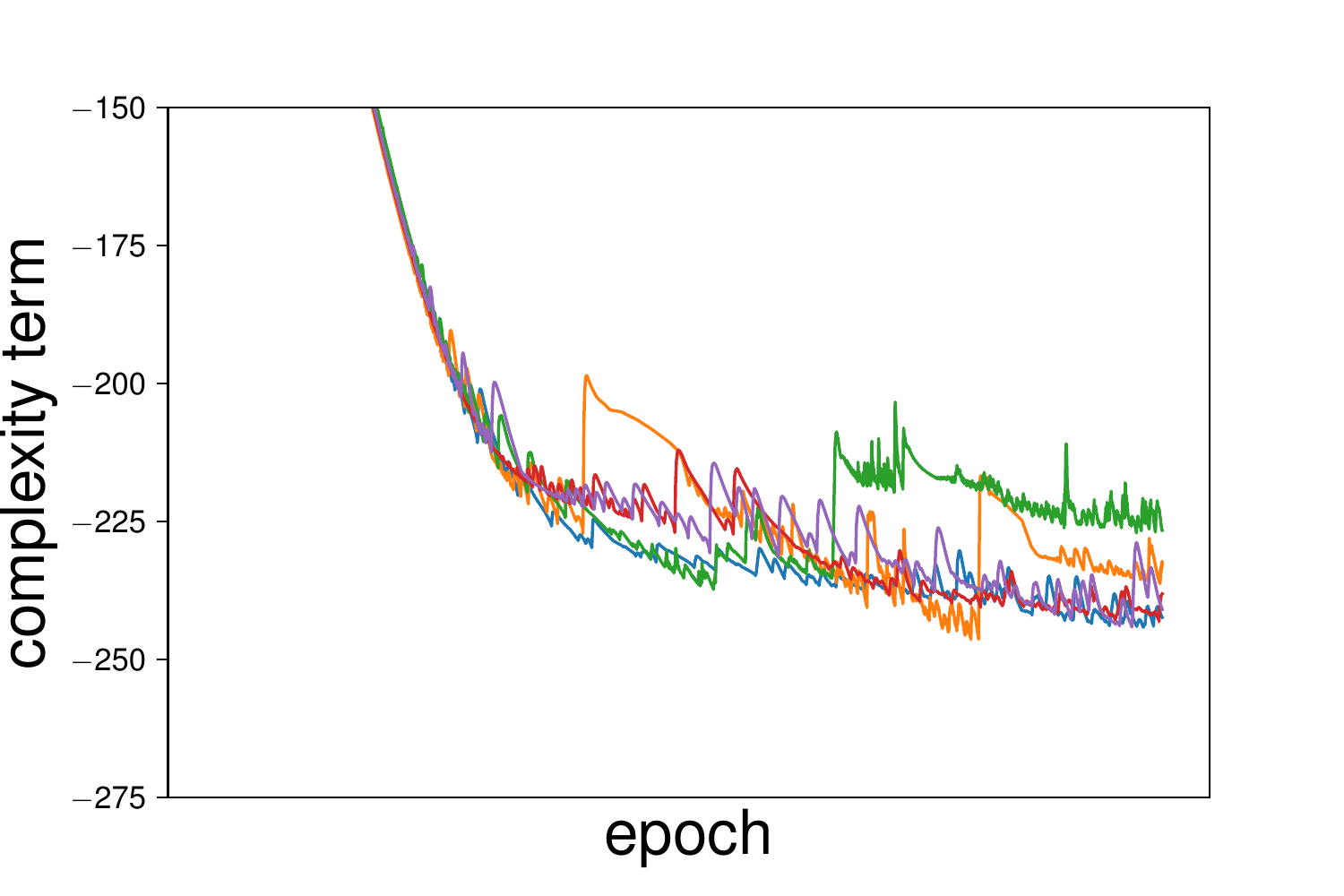}}
    \caption{Complexity penalty}
    \label{fig:toy_comp_term}
    
    \end{subfigure}
    \caption{Training curves for the data fit and complexity penalties of the log marginal likelihood for the toy problem.}
    \label{fig:f+c_curves}
\end{figure}
To motivate the rest of the paper, we first consider (exact) DKL a toy 1D regression problem \citep{snelson2006sparse} with 200 datapoints.
We consider DKL using a two hidden-layer fully-connected ReLU network with layer widths $[100, 50]$ as the feature extractor, letting $Q = 2$ with a squared exponential kernel for the GP.\footnote{We note that this is a smaller feature extractor than that proposed for a dataset of this size in \citet{wilson2016deep}.}
We describe the architecture and experimental details in more detail in App.~B.\looseness=-1
\begin{figure}
    \centering
    \includegraphics[width=0.5\textwidth]{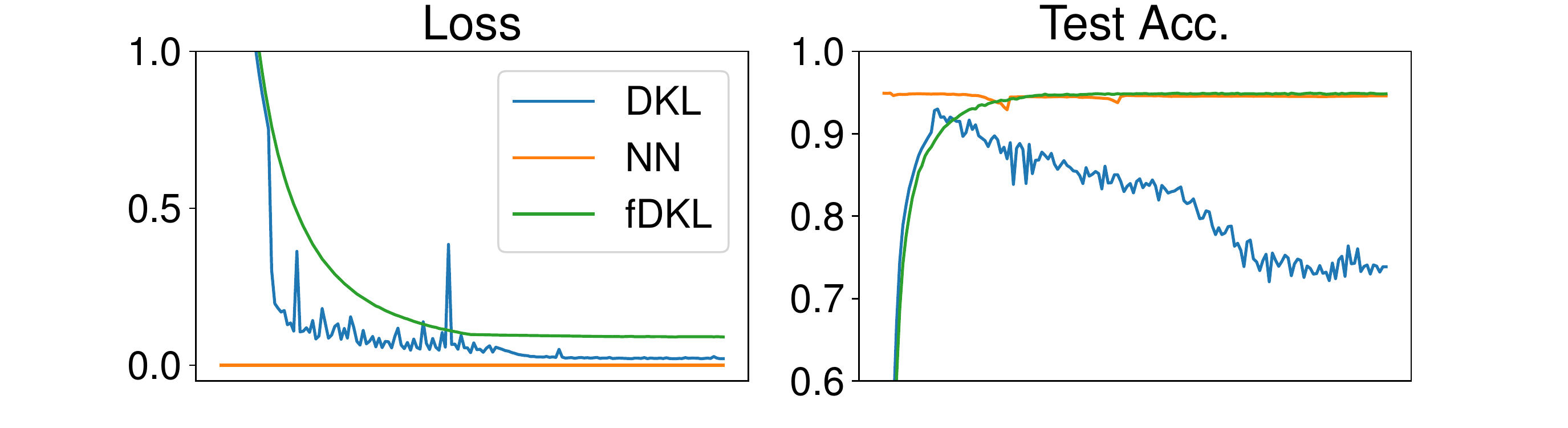}
    \caption{5k subset of MNIST, using a pretrained NN.}
    \label{fig:mnist}
\end{figure}

We plot the predictive posteriors of both a baseline GP with an SE kernel (corresponding to the ground truth), and DKL in Figures~\ref{fig:toy_SE_fit} and \ref{fig:toy_DKL_fit}, respectively.
We observe that DKL suffers from poor behavior: the fit is very jagged and extrapolates wildly outside the training data.
On the other hand, the fit given by the SE kernel is smooth and fits the data well without any signs of overfitting.
We therefore make the following observation:\looseness=-1
\begin{rem}\label{rem:overfitting-exists}
DKL models can be susceptible to overfitting, suggesting that the ``complexity penalty'' of the marginal likelihood may not always prevent overfitting.
\end{rem}\looseness=-1

We next compare to the fit given by the deterministic neural network which uses the same feature extractor as the DKL model, so that both models have the same depth.
To ensure a fair comparison, we retain the same training procedure, learning rates, full batch training, and number of optimization steps, so that we only change the model and training loss (from the LML to mean squared error).
We display the fit in Fig.~\ref{fig:toy_NN_fit}, which shows a nicer fit than the DKL fit of Fig.~\ref{fig:toy_DKL_fit}: while there is some evidence of overfitting, it is less than that of DKL.
This leads us to our second observation:\looseness=-1
\begin{rem}\label{rem:worse-overfitting}
DKL can exhibit \emph{worse} overfitting than a standard neural network trained using maximum likelihood.
\end{rem}\looseness=-1

We next plot training curves from five different runs of DKL in Fig.~\ref{fig:toy_DKL_traincurves}.
From these, we observe that training is very unstable, with many significant spikes in the marginal likelihood objective.
While we found that reducing the learning rate does improve stability, but only slightly (App.~C.1).
We also observe that runs often ends up settling in a different locations with different final values of the log marginal likelihood. 
We plot different fits from different initializations in App.~C.1, showing that these different local minima give very different fits with different generalization properties.\looseness=-1

In general, this behavior is concerning: one would hope that adding a Bayesian layer to a deterministic network would improve performance, as introducing Bayesian principles is often touted as a method to reduce overfitting (e.g. \citet{osawa2019practical}). 
However, based off this toy problem performance seems to worsen with the addition of a Bayesian layer at the output.
As this finding is seemingly at conflict with most of the literature, which has found that DKL, or variations thereof, can be useful, we devote the rest of this work to understanding when and why this pathology arises, including for real datasets.\looseness=-1

\begin{figure*}
    \centering
    \includegraphics[width=\linewidth]{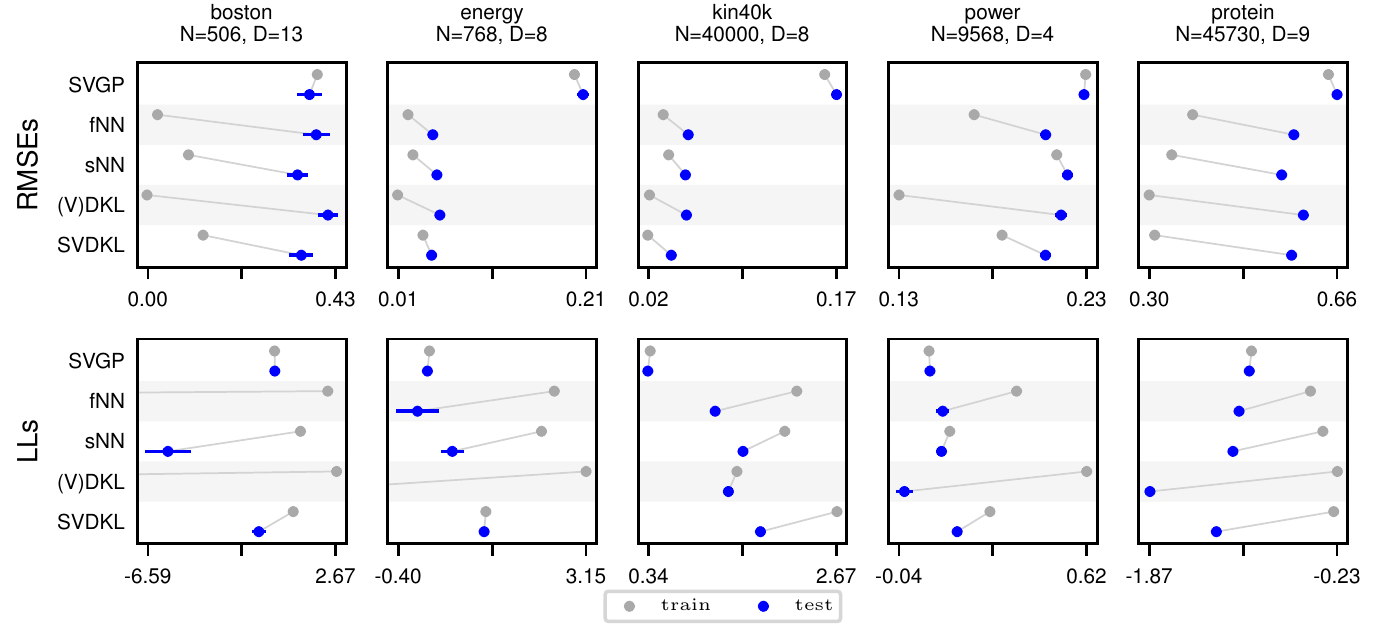}

    \caption{Results for the UCI datasets. We report train and test RMSEs and log likelihoods (LLs) for each method, averaged over the 20 splits. Left is better for RMSEs; right is better for LLs. Error bars represent one standard error.}
    \label{fig:uci_plots}

\end{figure*}

\section{Understanding the Pathology}
\label{sec:understanding}
\subsection{Regression}
To help understand the observed pathological behavior, we first look at the curves of the ``data fit'' and ``complexity penalties'' for five different initializations on the toy dataset. 
We present these curves in Fig.~\ref{fig:f+c_curves}. We note that each of the data fit curves largely stabilize around -100 nats, so that the complexity terms seem to account for most of the differences in the final marginal likelihood (Fig.~\ref{fig:toy_DKL_traincurves}). 
This behavior is explained by the following proposition, which states that the data fit term becomes uninteresting for any GPs with learnable signal variance trained on the marginal likelihood.\looseness=-1

\begin{restatable}[]{prop}{lmlprop}\label{prop:lml}
Consider the GP regression model as described in Eq.~\ref{eq:gp-regression}. 
Then, for any valid kernel function that can be written in the form $k(x, x') = \sigma_f^2 \hat{k}(x, x')$, where $\sigma_f^2$ is a learnable hyperparameter along with learnable noise $\sigma_n^2$ (and any other kernel hyperparameters), we have that the ``data fit'' term will equal $-N/2$ (where $N$ is the number of datapoints) at the optimum of the marginal likelihood.
\end{restatable}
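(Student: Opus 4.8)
The plan is to exploit a one-parameter rescaling symmetry of the hyperparameters together with the first-order optimality conditions. Write $A_\theta := K + \sigma_n^2 I_N = \sigma_f^2 \hat{K} + \sigma_n^2 I_N$, where $\hat{K}$ is the matrix with entries $\hat{k}(x_i, x_j)$, so that by Eq.~\ref{eq:compdf} the log marginal likelihood is, up to an additive constant, $\log p(\y) \eqc -\tfrac12\log|A_\theta| - \tfrac12 \y^T A_\theta^{-1}\y$. The key observation is that, for any scalar $\alpha > 0$, the replacement $(\sigma_f^2, \sigma_n^2) \mapsto (\alpha\sigma_f^2, \alpha\sigma_n^2)$ — with $\hat{k}$ and all remaining kernel hyperparameters held fixed — is an admissible perturbation of $\theta$, and it acts on the covariance matrix simply by $A_\theta \mapsto \alpha A_\theta$.

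First I would restrict the marginal likelihood to this curve. Writing $\ell(\alpha)$ for the LML evaluated at the rescaled hyperparameters and using $\log|\alpha A_\theta| = N\log\alpha + \log|A_\theta|$ together with $(\alpha A_\theta)^{-1} = \alpha^{-1} A_\theta^{-1}$, we get
\begin{equation}
  \ell(\alpha) \eqc -\tfrac12\bigl(N\log\alpha + \log|A_\theta|\bigr) - \tfrac{1}{2\alpha}\,\y^T A_\theta^{-1}\y ,
\end{equation}
whose derivative is $\ell'(\alpha) = -\tfrac{N}{2\alpha} + \tfrac{1}{2\alpha^2}\,\y^T A_\theta^{-1}\y$. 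At a maximizer $\theta^\star$ of the marginal likelihood, $\alpha = 1$ must be a stationary point of $\ell$, since $\sigma_f^2$ and $\sigma_n^2$ are strictly positive there and we are free to scale in either direction, so the optimum is interior along this curve. Setting $\ell'(1) = 0$ yields $\y^T A_{\theta^\star}^{-1}\y = N$, and hence the data fit term $-\tfrac12\y^T A_{\theta^\star}^{-1}\y$ equals $-N/2$, which is the claim. (As a sanity check one can verify $\ell''(1) = -N/2 < 0$, consistent with a maximum.)

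I do not expect a real obstacle here: the argument is essentially a one-line Lagrange-type computation once the scaling symmetry is identified. The only subtlety worth flagging is the boundary behaviour — the statement is genuinely a property of interior stationary points, so one should note that any optimum with $\sigma_f^2, \sigma_n^2 > 0$ satisfies it, while degenerate limits such as $\sigma_f^2 \to 0$ or $\sigma_n^2 \to 0$ must be excluded or handled separately. An equivalent route is to impose $\partial\ell/\partial\sigma_f^2 = 0$ and $\partial\ell/\partial\sigma_n^2 = 0$ directly and take the combination $\sigma_f^2\,\partial_{\sigma_f^2}\ell + \sigma_n^2\,\partial_{\sigma_n^2}\ell$, which produces the same cancellation; the rescaling viewpoint simply makes it most transparent, and it also explains why the result is insensitive to the presence of additional kernel hyperparameters, since those are held fixed along the curve.
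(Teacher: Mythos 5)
Your argument is correct and is essentially the paper's own proof in different clothing: the paper reparameterizes $\sigma_n^2 = \hat{\sigma}_n^2\sigma_f^2$ so that $\sigma_f^2$ becomes the overall scale of $K + \sigma_n^2 I_N$, and differentiating the LML with respect to $\sigma_f^2$ there is exactly your derivative along the rescaling curve $(\sigma_f^2,\sigma_n^2)\mapsto(\alpha\sigma_f^2,\alpha\sigma_n^2)$ at $\alpha=1$, yielding $\y^T(K+\sigma_n^2 I_N)^{-1}\y = N$ and hence a data fit of $-N/2$. Your added remarks (the second-derivative check, the interior-optimum caveat, and the equivalent combination of partials) are sensible but do not change the substance.
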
\looseness=-1

The proof (App.~A) is achieved by differentiation with respect to $\sigma_f$.
This result is far-reaching, as the use of a learnable signal variance $\sigma_f^2$ is almost universal.
This proposition therefore implies that after training the complexity penalty is responsible for any difference in marginal likelihood for GPs with different kernels.
Recall that the complexity penalty is given by\looseness=-1
\begin{equation}
    \label{eq:complexity}
    \frac{1}{2}\log |K + \sigma_n^2 I_N| = \frac{N}{2}\log \sigma_f^2 + \frac{1}{2}\log |\hat{K} + \hat{\sigma}_n^2 I_N|.
\end{equation}\looseness=-1
Maximizing the marginal likelihood encourages this term to be minimized, which can be done in at least two ways: minimizing $\sigma_f$, or minimizing the $|\hat{K} + \hat{\sigma}_n^2 I_N|$. 
However, there is little freedom in minimizing $\sigma_f$, because that would compromise the data fit. 
Therefore, the main mechanism for minimizing the complexity penalty would be through minimizing the second term. 
One way of doing this is to correlate the input points as much as possible: if there are enough degrees of freedom in the kernel, it is possible to ``hack'' the Gram matrix so that it can do this while minimizing the impact on the data fit term. 
We see this by looking at the correlation plots for the SE and DKL fits in Fig.~\ref{fig:toy_problem}: below the plots of the predictive posteriors, we have plotted correlation functions $\rho_{x'}(x) = k(x, x')/\sigma_f^2$ at two points $x'$ given by the vertical dashed lines. 
We see that, while Fig.~\ref{fig:toy_SE_fit} shows the expected Gaussian bump for the SE kernel, Fig.~\ref{fig:toy_DKL_fit} shows near-unity correlation functions for all values. 
Furthermore, in Appendix~C.1 we show empirically that for fits that do not show as much correlation, the final marginal likelihood is worse, suggesting that increasing the correlation is indeed the main mechanism by which the model increases its marginal likelihood.
We note that one mechanism of correlating all the datapoints has concurrently been explored and termed ``feature collapse'' \citep{van2021improving}, where the neural network feature extractor learns to collapse all the datapoints onto a low-dimensional surface.
We summarize our findings in the remark:\looseness=-1

\begin{rem}\label{rem:correlation}
The complexity penalty encourages high correlation between different points. 
Overparameterizing the  covariance function can lead to pathological results, as it allows all points to be correlated in the prior, not only the points where we would like correlations to appear.
\end{rem}\looseness=-1

\subsection{Classification}
We now briefly consider classification. We compare a neural network (NN) using the usual softmax loss with DKL. Due to the non-Gaussian likelihood, we use the variational approximation to the marginal likelihood (Eq.\ref{eq:ELBO}). We also compare a DKL model with the NN feature extractor fixed to what is obtained from the normal NN training procedure, which we refer to as `fixed net DKL' (fDKL).
All models are initialized with the same pretrained neural network using standard NN training for a fair comparison.
Fig.~\ref{fig:mnist} shows the training curves for the losses and test accuracies on a subset of 5000 points of MNIST. All models are trained with full batches (see App.~B for additional details). We observe that the standard NN has a near-zero loss without worsening test accuracy. DKL also attains low loss but significantly overfits. The loss obtained by fDKL is the highest of the three models, but does not overfit, and achieves the best test accuracy by a small margin.

We can explain the results in a similar way to regression, even though the SVDKL loss is different. In this case, the expected log-likelihood measures the data fit, while the KL enforces simplicity of the prior and approximate posterior.\footnote{Indeed, the KL term contains a $\frac{1}{2}\log|K|$ term, just like Eq.~\ref{eq:compdf}, only where the kernel is evaluated at the inducing points.}
Since the MNIST classes are well-separated, we expect a near-zero data fit term. In the standard NN training loss, there is little encouragement to overfit, since a well-fitted model already achieves a loss close to the global minimum of zero. The DKL objective, on the other hand, contains the complexity penalty which can be further reduced by over-correlating points, just as in regression.

We now investigate how these observations relate to real, complex datasets, as well as to the prior literature which has shown that DKL can obtain good results.\looseness=-1

\section{DKL for Real Datasets}
\label{sec:realdata}
Despite these findings, multiple works have shown that DKL methods can perform well in practice \citep{wilson2016stochastic, bradshaw2017adversarial}.
We now consider experiments on various datasets and architectures to further investigate the observed pathological behavior and how DKL succeeds.
We provide full experimental details in Appendix~B and additional experimental results in Appendix~C.\looseness=-1

\subsection{DKL for UCI Regression}
\begin{table}
\small
    \centering
    \caption{LMLs/ELBOs per datapoint for UCI datasets.}\label{tab:uci_elbos}
    \begin{tabular}{rccc}
\toprule 
& SVGP & (V)DKL & SVDKL \\ 
\midrule 
\textsc{Boston}& -1.66 $\pm$ 0.06 & $\mathbf{2.47 \pm 0.00}$ & 0.47 $\pm$ 0.01 \\ 
\textsc{Energy}& -0.07 $\pm$ 0.01 & $\mathbf{3.01 \pm 0.02}$ & 1.21 $\pm$ 0.00 \\ 
\textsc{Kin40K}& 0.14 $\pm$ 0.00 & 1.41 $\pm$ 0.00 & $\mathbf{2.62 \pm 0.00}$ \\ 
\textsc{Power}& 0.01 $\pm$ 0.00 & $\mathbf{0.57 \pm 0.00}$ & 0.25 $\pm$ 0.00 \\
\textsc{Protein}& -1.06 $\pm$ 0.00 & -$\mathbf{0.32 \pm 0.01}$ & -0.35 $\pm$ 0.00 \\ 
\bottomrule
\end{tabular}
\end{table}
\begin{table*}
\scriptsize
    \centering
    \caption{Results for the UTKFace age regression task and CIFAR-10 classification, without data augmentation. We report means plus/minus one standard error, averaged over three runs.}\label{tab:batch}
    \begin{tabular}{rcccccccc}
\toprule 
& \multicolumn{5}{c}{Batch size: 100} & \multicolumn{3}{c}{Batch size: 200/500} \\ 
\cmidrule(lr){2-6}
\cmidrule(lr){7-9}
& NN & SVDKL & pNN & fSVDKL & pSVDKL & pNN & fSVDKL & pSVDKL \\ 
\midrule
UTKFace - ELBO & - & 0.92 $\pm$ 0.01 & - & 1.05 $\pm$ 0.30 & 1.03 $\pm$ 0.10 & - & 0.75 $\pm$ 0.34 & 1.43$\pm$0.04 \\ 
Train RMSE & 0.04$\pm$0.00 & 0.04$\pm$0.00 & 0.04$\pm$0.00 & 0.08 $\pm$ 0.03 & 0.04$\pm$0.00 & 0.04 $\pm$ 0.00 & 0.12 $\pm$ 0.03 & 0.04 $\pm$ 0.00 \\ 
Test RMSE & 0.40 $\pm$ 0.00 & 0.40 $\pm$ 0.01 & 0.41 $\pm$ 0.00 & 0.31 $\pm$ 0.07 & 0.38 $\pm$ 0.02 & 0.39 $\pm$ 0.01 & 0.23 $\pm$ 0.07 & 0.34 $\pm$ 0.02 \\ 
Train LL & 1.81 $\pm$ 0.01 & 1.30 $\pm$ 0.01 & 1.83 $\pm$ 0.01 & 1.16 $\pm$ 0.31 & 1.20 $\pm$ 0.08 & 1.83 $\pm$ 0.01 & 0.82 $\pm$ 0.34 & 1.60 $\pm$ 0.03 \\ 
Test LL & -48.73 $\pm$ 1.64 & -6.88 $\pm$ 0.38 & -53.72 $\pm$ 1.71 & -7.55 $\pm$ 3.42 & -4.74 $\pm$ 1.35 & -48.48 $\pm$ 2.07 & -5.36 $\pm$ 4.78 & -10.43 $\pm$ 2.94 \\
\midrule
CIFAR-10 - ELBO & - & -0.76 $\pm$ 0.28 & - & -0.02 $\pm$ 0.00 & -0.00 $\pm$ 0.00 & - & -0.02 $\pm$ 0.00 & -0.00 $\pm$ 0.00 \\ 
Train Acc. & 1.00 $\pm$ 0.00 & 0.76 $\pm$ 0.09 & 1.00 $\pm$ 0.00 & 1.00 $\pm$ 0.00 & 1.00 $\pm$ 0.00 & 1.00 $\pm$ 0.00 & 1.00 $\pm$ 0.00 & 1.00 $\pm$ 0.00 \\ 
Test Acc. & 0.79 $\pm$ 0.00 & 0.63 $\pm$ 0.03 & 0.79 $\pm$ 0.00 & 0.78 $\pm$ 0.00 & 0.79 $\pm$ 0.00 & 0.79 $\pm$ 0.00 & 0.79 $\pm$ 0.00 & 0.79 $\pm$ 0.00 \\ 
Train LL & -0.00 $\pm$ 0.00 & -0.71 $\pm$ 0.28 & -0.00 $\pm$ 0.00 & -0.01 $\pm$ 0.00 & -0.00 $\pm$ 0.00 & -0.00 $\pm$ 0.00 & -0.00 $\pm$ 0.00 & -0.00 $\pm$ 0.00 \\ 
Test LL & -2.05 $\pm$ 0.03 & -1.37 $\pm$ 0.10 & -2.30 $\pm$ 0.11 & -1.14 $\pm$ 0.00 & -1.13 $\pm$ 0.01 & -2.88 $\pm$ 0.04 & -1.07 $\pm$ 0.01 & -1.45 $\pm$ 0.00 \\ 
Inc. Test LL & -8.87 $\pm$ 0.10 & -3.38 $\pm$ 0.77 & -9.48 $\pm$ 0.30 & -5.10 $\pm$ 0.01 & -5.24 $\pm$ 0.05 & -10.77 $\pm$ 0.07 & -4.73 $\pm$ 0.03 & -6.63 $\pm$ 0.03 \\ 
ECE & 0.18 $\pm$ 0.00 & 0.10 $\pm$ 0.05 & 0.19 $\pm$ 0.00 & 0.14 $\pm$ 0.00 & 0.15 $\pm$ 0.00 & 0.19 $\pm$ 0.00 & 0.13 $\pm$ 0.00 & 0.15 $\pm$ 0.00 \\ 
\bottomrule 
    \end{tabular}
\end{table*}

We first consider DKL applied to a selection of regression datasets from the UCI repository \citep{Dua:2019}: \textsc{Boston}, \textsc{Energy}, \textsc{Kin40K}, \textsc{Power}, \textsc{Protein}.
These represent a range of different sizes and dimensions: \textsc{Energy}, \textsc{Power}, and \textsc{Protein} were chosen specifically because we expect that they can benefit from the added depth to a GP \citep{salimbeni2017doubly}.\looseness=-1

We consider a range of different models, and we report train and test root mean square errors (RMSEs) and log likelihoods (LLs) in Fig.~\ref{fig:uci_plots}, and tabulate the log marginal likelihoods (LMLs) or ELBOs in Table~\ref{tab:uci_elbos}.
First, we consider a baseline stochastic variational GP (SVGP) model with an ARD SE kernel.
As this is a GP model with few hyperparameters, we would not expect significant differences between training and testing performances. 
Indeed, looking at Fig.~\ref{fig:uci_plots}, this is exactly what we observe: the test performance is comparable to, and sometimes even slightly better than, the training performance for both RMSEs and LLs.\looseness=-1

We compare to a neural network trained with mean squared error loss and DKL using the same neural network architecture for feature extractor (so that the depths are equal).
We first consider DKL models where we use full-batch training, compared to a neural network with full-batch training, which we refer to as fNN.
As full-batch training for DKL is expensive for larger datasets, for the \textsc{Kin40K}, \textsc{Power}, and \textsc{Protein} we instead use SVDKL trained with 1000 inducing points but full training batches, which we term \emph{variational} DKL (VDKL).
For both methods we use a small weight decay to help reduce overfitting, and we use the same number of gradient steps are used for each to ensure a fair comparison.
Looking at the results for fNN and (V)DKL in Fig.~\ref{fig:uci_plots}, we see that both of these methods overfit quite drastically.
This mirrors our observations in Remark~\ref{rem:overfitting-exists} that DKL models can be susceptible to overfitting.
In most cases the overfitting is noticeably worse for (V)DKL than it is for fNN, reflecting our observation in Remark~\ref{rem:worse-overfitting}.
This is particularly concerning for the log likelihoods, as one would hope that the ability of DKL to express epistemic uncertainty through the last-layer GP would give it a major advantage over the neural network, which cannot do so.\looseness=-1

In practice, however, many approaches for DKL and neural networks alike make use of stochastic minibatching during training.
In fact, it is well-known that minibatch training induces implicit regularization for neural networks that helps generalization \citep{keskar2016large}.
We therefore investigate this for both DKL and neural networks: we refer to the stochastic minibatched network as sNN and compare to SVDKL, using the same batch sizes for both.
Referring again to Fig.~\ref{fig:uci_plots}, we see that minibatching generally reduces overfitting compared to the full-batch versions, for both model types.
Moreover, the difference between the full batch and stochastic minibatch performances of DKL seem to be greater than the corresponding differences for the standard neural networks, suggesting that the implicit regularization effect is stronger.
The exception to this trend is \textsc{Kin40K}, which appears to be low-noise and simple for a deep model to predict for.
We also note that with the exception of protein, SVDKL now performs the best of the deep models in terms of log likelihoods, and generally performs better than SVGP.\looseness=-1

Finally, we consider Table~\ref{tab:uci_elbos}, which shows the ELBOs/LMLs for each of the GP methods.
SVGP has by far the worse ELBOs, whereas (V)DKL generally has by far the best.
It is important to note that the ELBOs for SVDKL are worse than those for (V)DKL despite its generally better test performance.
This suggests that improving the marginal likelihood for DKL models does not improve test performance, as one would desire for a Bayesian model.
We summarize our findings in the following remark:\looseness=-1
\begin{rem}\label{rem:minibatching}
The reason for DKL's successful performance is not an improved marginal likelihood, but rather that stochastic minibatching provides implicit regularization that protects against overfitting with the marginal likelihood.
\end{rem}\looseness=-1

Therefore, we observe again that the Bayesian benefits of the marginal likelihood do not apply in the overparameterized regime: indeed, we find that using the marginal likelihood can be worse than not being Bayesian at all.\looseness=-1

\subsection{DKL for Image Datasets}\label{sec:image-datasets}
\begin{table*}
\scriptsize
    \centering
    \caption{Results for the image datasets with data augmentation. We report means $\pm1$ standard error, averaged over 3 runs.}\label{tab:batch_da}
    \begin{tabular}{rcccccccc}
\toprule 
& \multicolumn{4}{c}{Batch size: 100} & \multicolumn{3}{c}{Batch size: 200 (UTKFace) / 500 (CIFAR-10)} \\ 
\cmidrule(lr){2-5}
\cmidrule(lr){6-8}
& NN & pNN & fSVDKL & pSVDKL & pNN & fSVDKL & pSVDKL \\ 
\midrule 
UTKFace - ELBO & - & - & 0.16 $\pm$ 0.03 & 0.14 $\pm$ 0.03 & - & 0.12 $\pm$ 0.06 & 0.45 $\pm$ 0.03 \\ 
Train RMSE & 0.19 $\pm$ 0.01 & 0.18 $\pm$ 0.00 & 0.19 $\pm$ 0.00 & 0.17 $\pm$ 0.01 & 0.13 $\pm$ 0.00 & 0.20 $\pm$ 0.01 & 0.12 $\pm$ 0.01 \\ 
Test RMSE & 0.36 $\pm$ 0.00 & 0.36 $\pm$ 0.00 & 0.36 $\pm$ 0.00 & 0.35 $\pm$ 0.00 & 0.35 $\pm$ 0.00 & 0.31 $\pm$ 0.04 & 0.35 $\pm$ 0.01 \\ 
Train LL & 0.25 $\pm$ 0.03 & 0.31 $\pm$ 0.01 & 0.25 $\pm$ 0.03 & 0.30 $\pm$ 0.03 & 0.65 $\pm$ 0.02 & 0.20 $\pm$ 0.06 & 0.63 $\pm$ 0.04 \\ 
Test LL & -1.03 $\pm$ 0.07 & -1.22 $\pm$ 0.05 & -0.92 $\pm$ 0.07 & -0.76 $\pm$ 0.03 & -2.72 $\pm$ 0.21 & -0.63 $\pm$ 0.30 & -1.55 $\pm$ 0.17 \\ 
\midrule
CIFAR-10 - ELBO & - & - & -0.07 $\pm$ 0.00 & -0.03 $\pm$ 0.00 & - & -0.06 $\pm$ 0.01 & -0.01 $\pm$ 0.00 \\ 
Train Acc. & 0.98 $\pm$ 0.00 & 0.99 $\pm$ 0.00 & 0.99 $\pm$ 0.00 & 0.99 $\pm$ 0.00 & 1.00 $\pm$ 0.00 & 0.98 $\pm$ 0.00 & 1.00 $\pm$ 0.00 \\ 
Test Acc. & 0.86 $\pm$ 0.00 & 0.86 $\pm$ 0.00 & 0.86 $\pm$ 0.00 & 0.86 $\pm$ 0.00 & 0.87 $\pm$ 0.00 & 0.86 $\pm$ 0.00 & 0.86 $\pm$ 0.00 \\ 
Train LL & -0.05 $\pm$ 0.00 & -0.02 $\pm$ 0.00 & -0.05 $\pm$ 0.00 & -0.03 $\pm$ 0.00 & -0.01 $\pm$ 0.00 & -0.05 $\pm$ 0.01 & -0.01 $\pm$ 0.00 \\ 
Test LL & -0.70 $\pm$ 0.01 & -0.90 $\pm$ 0.00 & -0.68 $\pm$ 0.00 & -0.64 $\pm$ 0.00 & -1.38 $\pm$ 0.03 & -0.67 $\pm$ 0.02 & -0.84 $\pm$ 0.00 \\ 
Inc. Test LL & -4.83 $\pm$ 0.12 & -6.31 $\pm$ 0.00 & -4.65 $\pm$ 0.00 & -4.58 $\pm$ 0.00 & -8.97 $\pm$ 0.07 & -4.66 $\pm$ 0.13 & -6.06 $\pm$ 0.01 \\ 
ECE & 0.09 $\pm$ 0.00 & 0.11 $\pm$ 0.00 & 0.09 $\pm$ 0.00 & 0.09 $\pm$ 0.00 & 0.12 $\pm$ 0.00 & 0.09 $\pm$ 0.00 & 0.11 $\pm$ 0.00 \\ 
\bottomrule 
    \end{tabular}
\end{table*}

We now explore how these findings relate to high-dimensional, highly structure image datasets.
We might expect that the benefits of DKL would be stronger for images than in the previous regression datasets, as the design of kernels for these high-dimensional spaces remains an open question despite numerous recent advances \citep{van2017convolutional, dutordoir2020bayesian}, and neural networks generally perform far better than kernel methods.\looseness=-1

We first consider a regression problem using image inputs: an age regression task using the UTKFace dataset \citep{zhifei2017cvpr}.
The dataset consists of 23,708 images of size $200\times200\times3$ containing aligned and cropped faces.
These images are annotated with age, gender and race, where we focus on predicting age. We consider models based on a ResNet-18 \citep{he2016deep}: we take the standard ResNet-18 with 10-dimensional output, to which we add a ReLU nonlinearity and then either a linear output layer or an ARD SE GP, corresponding to the baseline neural network and SVDKL, respectively.
We consider different feature widths $Q$ in App.~C.
This construction ensures that both models have the same depth, so that any improvement observed for either cannot be attributed to the fact that the models have different depths.
We consider the baseline neural network (NN) and SVDKL models.
Additionally, as both \citet{wilson2016stochastic} and \citet{bradshaw2017adversarial} use a pretraining and finetuning procedure for their models, we compare to this as well.
We take the trained baseline NNs, and first learn the variational parameters and GP hyperparameters, keeping the network fixed.
We refer to the result as the fixed net SVDKL (fSVDKL) model; we then train everything jointly for a number of epochs, resulting in the pretrained SVKDL (pSVDKL) model.
Finally, so that any improvement for f/pSVDKL is not just from additional gradient steps, we also train the neural networks for the same number of epochs, resulting in the pretrained NN (pNN) model.
We average all results over 3 independent runs using a batch size of 100, and we refer the reader to App.~B.3 for full experimental details.\looseness=-1

We report ELBOs, train and test RMSEs, and train and test log likelihoods for the normalized data in the top left portion of Table~\ref{tab:batch} (batch size 100).
We see that SVDKL, the method without pretraining, obtains lower ELBOs than either fSVDKL or pSVDKL, which obtain largely similar ELBOs.
We suspect that this is because of the difficulty in training large DKL models from scratch, as noted in \citet{bradshaw2017adversarial}; this is also consistent with our earlier observation that training can be very unstable.
We see that each method, except fSVDKL (with the fixed pretrained network), achieves similar train RMSE, but the test RMSEs are significantly worse, with fSVDKL obtaining the best.
Unsurprisingly, the NN models perform poorly in terms of LL, as they are unable to express epistemic uncertainty. 
However, we also observe that additional training of the NNs worsens both test RMSEs and LLs.
pSVDKL (where the network is allowed to change after pretraining) obtains the best test LL of all methods, as well as better test RMSE than the neural networks, showing that SVDKL can yield improvements consistent with the prior literature.
We note, however, that there is still a substantial gap between train and test performance, indicating overfitting in a way consistent with Remark~\ref{rem:overfitting-exists}.\looseness=-1

\subsubsection{Increasing the Batch Size}
From our UCI experiments, we hypothesized that implicit regularization from minibatch noise was key in obtaining good performance for SVDKL (Remark~\ref{rem:minibatching}). 
We therefore consider increasing the batch size from 100 to 200 for the pretrained methods, keeping the pretrained neural networks the same (Table~\ref{tab:batch}, top right).
We make a few key observations.
First, this leads to a significantly improved ELBO for pSVDKL, which ends up helping the test RMSE.
However, we see that instead of improving the test LL, it becomes significantly worse, whereas the train LL becomes better: clear evidence of overfitting.
Moreover, fSVDKL, where the network is kept fixed, now outperforms pSVDKL, which has a better ELBO.
Finally, we note that the behavior of pNN does not change significantly, in fact slightly improving with increased batch size: this suggests that the implicit regularization from minibatching is stronger for SVDKL than for standard NNs.
All of these observations are consistent with our findings surrounding Remark~\ref{rem:minibatching}, which argues that stochastic minibatching is crucial to the success of DKL methods, and a better marginal likelihood is associated with worse performance.\looseness=-1

\subsubsection{Image Classification}
Our theory in Section~\ref{sec:understanding} only applies directly to regression. 
As one of the main successes of current deep learning is in classification, it is therefore natural to wonder whether the trends we have observed also apply to classification tasks.
We consider CIFAR-10 \citep{krizhevsky2009learning}, a popular dataset of $32\times32\times3$ images belonging to one of 10 classes.
We again consider a modified ResNet-18 model, in which we have ensured that the depths remain the same between NN and DKL models.
We consider training the models with batch sizes of 100 and 500.
We look at ELBOs, accuracies, and LLs, as well as the LL for incorrectly classified test points, which can indicate overconfidence in predicting wrongly.
We also look at expected calibration error (ECE; \citet{guo2017calibration}), a popular metric evaluating model calibration; results are shown in the lower portion of Table~\ref{tab:batch}.
Here, we see that plain SVDKL struggles even more to fit well, indicating the importance of pretraining.
For the batch size 100 experiments, pSVDKL generally performs the best, reflecting the experience of \citet{wilson2016stochastic} and \citet{bradshaw2017adversarial}.
However, we again observe that increasing the batch size hurts pSVDKL, and fSVKDL outperforms it despite worse ELBOs.\looseness=-1

\subsection{Data Augmentation}
It is common practice with image datasets to perform data augmentation, which effectively increases the size of the training dataset\footnote{Bayesian inference does not permit this, instead requiring that the model be adjusted \citep{vdw2018inv,nabarro2021}. Very recently, \citet{schwoebel2021last} applied this to DKL.} by using modified versions of the images.
We briefly consider whether this changes the overfitting behavior we observed, by repeating the same experiments (without plain SVDKL, as it struggles to fit) with random cropping and horizontal flipping augmentations; see Table~\ref{tab:batch_da}.
Overall, we once again find that increasing the batch size still significantly hurts the performance of pSVDKL: whereas pSVDKL outperforms the fixed-network version for batch size 100, larger batch sizes reverse this, so that finetuning the network according to the ELBO hurts, rather than helps, performance.
Therefore, in this case, using last-layer Bayesian inference is worse than not being Bayesian at all.
These results reflect our findings in the previous remarks that using the marginal likelihood can be worse than using a standard likelihood, and that stochastic minibatching is one of the main reasons that DKL can be successful.\looseness=-1

\section{Addressing the Pathology}\label{sec:fix}
We have seen that the empirical Bayesian approach to overparameterized GP kernels can lead to pathological behavior.
In particular, we have shown that methods that rely on the marginal likelihood to optimize a large number of hyperparameters can overfit, and that learning is unstable.
While minibatching can help mitigate these issues, the overall performance is sensitive to the batch size, leading to a separate hyperparameter to tune.
It is therefore natural to wonder whether we can address this by using a fully Bayesian approach, which has been shown to improve the predictive uncertainty of GP models \citep{lalchand2020approximate}.
Indeed, \citet{tran2019calibrating} showed that using Monte Carlo dropout to perform approximate Bayesian inference over the network parameters in DKL can improve calibration.\looseness=-1

\begin{figure}[t]
    \centering
    \begin{subfigure}[b]{0.23\textwidth}
    \centering
    \centerline{\includegraphics[width=\textwidth]{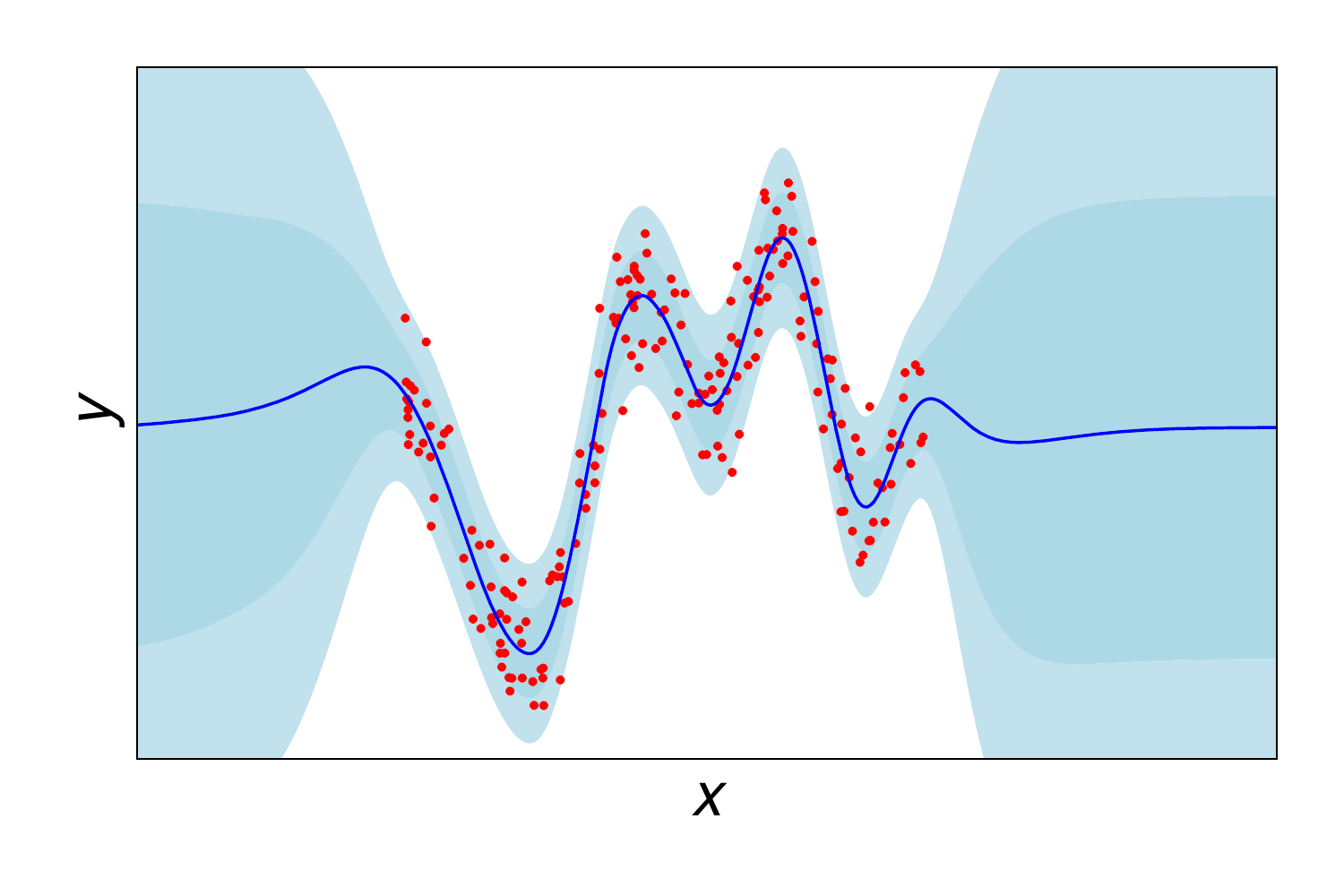}}
    \caption{Original dataset}
    \label{fig:toy_hmc_200}
    
    \end{subfigure}
    \hfill
    \begin{subfigure}[b]{0.23\textwidth}
    \centering
    \centerline{\includegraphics[width=\textwidth]{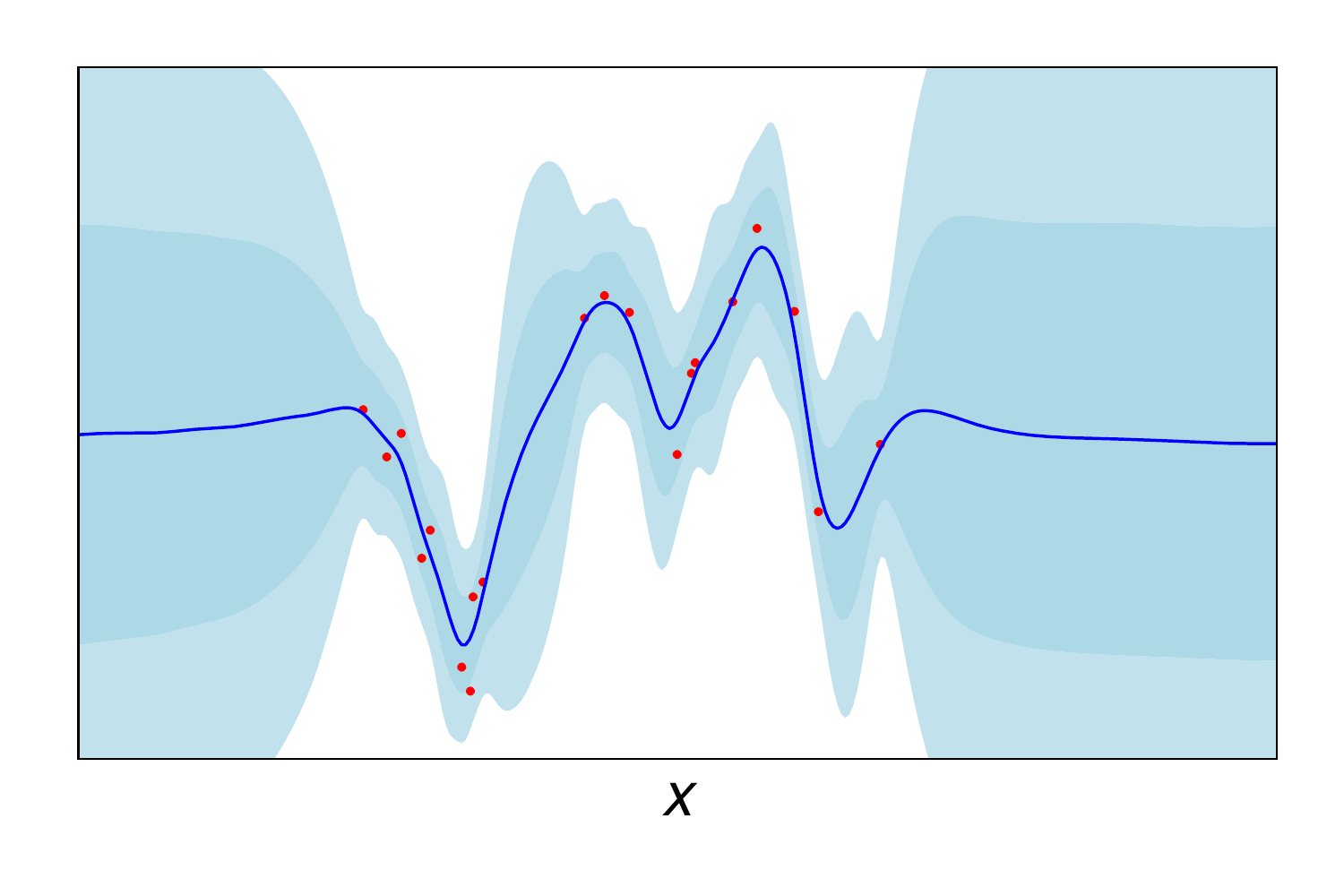}}
    \caption{Subsampled dataset}
    \label{fig:toy_hmc_20}
    
    \end{subfigure}
    \caption{Posteriors for fully Bayesian DKL using HMC.}
    \label{fig:hmc}
\end{figure}
\begin{table}
\small
    \centering
    \caption{Results for the image datasets with SGLD.}\label{tab:sgld}
    \begin{tabular}{rcc}
\toprule 
& NN & SVDKL \\ 
\midrule 
UTKFace - Test RMSE & $\mathbf{0.16 \pm 0.00}$ & $\mathbf{0.16 \pm 0.00}$ \\
Test LL & 0.39 $\pm$ 0.04 & $\mathbf{0.42 \pm 0.03}$ \\
\midrule
CIFAR-10 - Test Acc. & $\mathbf{0.79 \pm 0.00}$ & 0.78 $\pm$ 0.00 \\
Test LL & -1.89 $\pm$ 0.02 & -$\mathbf{1.11 \pm 0.02}$ \\
Inc. Test LL & -8.78 $\pm$ 0.11 & -$\mathbf{4.94 \pm 0.10}$ \\
ECE & 0.18 $\pm$ 0.00 & $\mathbf{0.13 \pm 0.00}$ \\
\bottomrule 
    \end{tabular}
\end{table}

We test this hypothesis using sampling methods.
We first consider the 1D toy problem, using HMC \citep{neal2011mcmc} to sample the neural network weights along with the other GP hyperparameters, using the marginal likelihood as the potential.
We plot the resulting posterior in Fig.~\ref{fig:toy_hmc_200}, and see that this completely resolves the problems observed earlier. In fact, the uncertainty in the outer regions is even greater than that given by the standard SE fit in Fig.~\ref{fig:toy_SE_fit}, while still concentrating where there is data.
We additionally consider a subsampled version of the dataset in Fig.~\ref{fig:toy_hmc_20}.
There is still no overfitting despite the small dataset size: for a comparison to the baseline SE kernel and DKL, see Fig.~1 in the Appendix.\looseness=-1

Unfortunately, HMC in its standard form does not scale to the larger datasets considered in Sec.~\ref{sec:image-datasets}, due to the necessity of calculating gradients over the entire dataset and the calculation of the acceptance probability.
Therefore, we consider stochastic gradient Langevin dynamics (SGLD; \citet{welling2011bayesian}), which allows us to use minibatches.
We note that SGLD has relatively little additional training cost compared to SGD, as it simply injects scaled Gaussian noise into the gradients; the main cost is in memory and at test time.
While we do not necessarily expect that this will be as accurate to the true posterior as HMC (see e.g.~\citet{johndrow2020no}), we hope that it will give insights into what the performance of a fully Bayesian approach would be.
We select a batch size of 100, and give test results for the NN and SVDKL for both UTKFace and CIFAR-10 without data augmentation in Table~\ref{tab:sgld}.
We see that for both datasets, the additional uncertainty significantly helps the NN models.
The improvement is significant for SVDKL for the UTKFace dataset, and while not so significant for CIFAR-10, we still observe slight improvements in log likelihoods and ECE, although at the expense of slightly lower test accuracy.
Moreover, the fully Bayesian SVDKL outperforms the Bayesian NN in nearly every metric, and significantly so for the uncertainty-related metrics.
In fact, for CIFAR-10, the original version of SVDKL (i.e. pSVDKL) outperforms the Bayesian NN for the uncertainty metrics, even for the larger batch size experiments.
Therefore, we arrive at our final remark:\looseness=-1
\begin{rem}\label{rem:fully-Bayesian}
A fully Bayesian approach to deep kernel learning can prevent overfitting and obtain the benefits of both neural networks and Gaussian processes.
\end{rem}\looseness=-1

\section{Conclusions}
In this work we have explored the performance of DKL in different regimes. 
We have shown that, while DKL models can achieve good performance, this is mostly because of implicit regularization due to stochastic minibatching rather than a better marginal likelihood.
This stochastic regularization appears to be stronger than that for plain neural networks.
Moreover, we have shown that when this stochastic regularization is limited, the performance can be worse than that of standard neural networks, with more overfitting and unstable training.
This is surprising, because DKL models are more Bayesian than deterministic neural networks, and so one might expect that they would be less prone to overfitting.
However, we have shown that for highly parameterized models, the marginal likelihood tries to correlate all the datapoints rather than those that should be correlated: therefore, a higher marginal likelihood does not improve performance.
This means that when the number of hyperparameters is large, the marginal likelihood cannot be relied upon for model selection as it often is, just as the standard maximum likelihood training loss cannot be used for model selection.
Finally, we showed that a fully Bayesian approach to the neural network hyperparameters can overcome this limitation and improve the performance over the less Bayesian approach, fully showing the advantages of DKL models.\looseness=-1

\begin{acknowledgements}
We thank the anonymous reviewers, John Bradshaw, David R. Burt, Andrew Y.K. Foong, Markus Lange-Hegermann, James Requeima, Pola Schw\"{o}bel, and Andrew Gordon Wilson for helpful discussions and comments. SWO acknowledges the Gates Cambridge Trust for funding his PhD.\looseness=-1
\end{acknowledgements}

\bibliography{uai2021-template}

\onecolumn

\appendix
\section{Proof of Proposition 1}\label{app:prop-proof}
We restate the proposition:
\lmlprop*

\begin{proof}
We reparameterize $\sigma_n^2 = \hat{\sigma}_n^2\sigma_f^2$. Then, writing $K + \sigma_n^2 I_N = \sigma_f^2(\hat{K} + \hat{\sigma}_n^2 I_N)$, the result follows by differentiating the log marginal likelihood with respect to $\sigma_f^2$:
\begin{align*}
    \frac{d}{d\sigma_f^2} \log p(\y) &= \frac{d}{d\sigma_f^2} \left(-\frac{N}{2}\log \sigma_f^2 -\frac{1}{2}\log |\hat{K}+\hat{\sigma}_n^2 I_N| - \frac{1}{2\sigma_f^2}\y^T(\hat{K} + \hat{\sigma}_n^2 I_N)^{-1}\y\right)\\
    &= -\frac{N}{2\sigma_f^2} + \frac{1}{2\sigma_f^4}\y^T(\hat{K}+\hat{\sigma}_n^2 I_N)^{-1}\y.
\end{align*}
Setting the derivative equal to zero gives:
\begin{equation*}
    \sigma_f^2 = \frac{1}{N}\y^T(\hat{K} + \hat{\sigma}_n^2 I_N)^{-1}\y.
\end{equation*}
Substituting this into the data fit term gives the desired result.
\end{proof}

We note that this result was essentially proven in \citet{moore2016fast}, although they did not consider the last step of substituting the result into the data fit term.
Instead, they used the result as a means of analytically solving for the optimal signal variance to reduce the number of parameters and hence speed up optimization.

\section{Experimental Details}\label{app:exp-deets}
All experiments on real datasets (MNIST, UCI, CIFAR-10, UTKFace) were written in TensorFlow 2 \citep{tensorflow2015-whitepaper}, using GPflow \citep{GPflow2017} to implement the DKL models.
We use jug \citep{coelho2017jug} to easily run the experiments.
The experiments were run on single GPUs using both NVIDIA Tesla P100-PCIE-16GB GPUs and NVIDIA GeForce RTX 2080 Ti GPUs.

\subsection{Datasets}\label{app:exp-datasets}
We describe the datasets used as well as the splits and preprocessing.
\paragraph{Toy dataset} The toy dataset is that as introduced in \citet{snelson2006sparse}. The dataset comprises 200 input-output pairs and can be found at \url{http://www.gatsby.ucl.ac.uk/~snelson/}. We normalize both inputs and outputs for training and plot the unnormalized values and predictions.

\paragraph{MNIST} We take the first 5,000 datapoints from the standard MNIST dataset \citep{lecun2010mnist}, and use the standard 10,000 point test set for evaluation.
We preprocess the images by dividing the pixel values by 255.

\paragraph{UCI} We use a slightly modified version of Bayesian Benchmarks (\url{https://github.com/hughsalimbeni/bayesian_benchmarks}) to obtain the UCI datasets we use. 
The modification is to rectify minor data leakage in the normalization code: they normalize using the statistics from the entire dataset before dividing into train/test splits, instead of normalizing using only the train split statistics. 
We perform cross-validation using 20 90\%/10\% train/test splits, and report means and standard errors for each metric.
Note that we report metrics on the normalized datasets to lead to more interpretable results: namely, an RMSE of 1 corresponds to predicting 0 for each test point.

\paragraph{CIFAR-10} We use the standard CIFAR-10 dataset \citep{krizhevsky2009learning}, with the standard train-validation split of 50,000 and 10,000 images, respectively, using the validation split as the test set, as is common practice. 
We preprocess the images by simply dividing the pixel values by 255, so that each value lies between 0 and 1. 

\paragraph{UTKFace} The UTKFace dataset \citep{zhifei2017cvpr} is a large face dataset consisting of 23,708 images of faces, annotated with age, gender, and ethnicity. 
The faces have an age range from 0 to 116.
We use the aligned and cropped version to limit the amount of preprocessing necessary, available at \url{https://susanqq.github.io/UTKFace/}.
These cropped images have sizes $200\times 200\times 3$.
We choose 20,000 images to be in the train dataset, with the remaining being used for testing.
We again perform preprocessing by dividing the pixel values by 255, and we additionally normalize the age values.
The metrics we report all use the normalized values, as with the UCI datasets.

\subsection{Models}\label{app:exp-models}
We describe the models used for the experiments. 
To ensure that the comparisons between neural networks and DKL models are as fair as possible, we ensure that each model used in direct comparison has the same number of layers: for the DKL models, we remove the last fully-connected layer of the neural network and replace it with the ARD SE GP. 
All neural networks use ReLU activations.
For all SVDKL models, the inducing points live in the neural network feature space at the input to the GP.

\paragraph{Toy dataset} We use an architecture of $[100, 50, 2]$ for the hidden-layer widths for the neural network.
For DKL, we use the pre-activation features of the final hidden layer for the input to the GP.

\paragraph{MNIST} We use a large fully-connected ReLU architecture of $[1000, 500, 500, 100, 100, 50, 50, 10]$.
For the DKL model, we use the same feature extractor with an ARD SE kernel, and 5000 inducing points to minimize the bias from the variational approximation.
The inducing points are initialized using the RobustGP method from \citet{burt2020convergence}.
We use the softmax likelihood for all models.

\paragraph{UCI} For \textsc{Boston, Energy}, we use a ReLU architecture of $[50, 50]$, and a ReLU architecture $[1000, 500, 50]$ for \textsc{Kin40K, Power}, and \textsc{Protein}.
We note that these architectures are smaller than the ones proposed by \citet{wilson2016deep}
For the SVGP baseline, we use an ARD SE kernel, with 100 inducing points for the small datasets (\textsc{Boston}, \textsc{Energy}) and 1000 inducing points for the larger ones.
For the DKL models, we use the post-activation features from the final hidden layer as inputs to the GPs, which use ARD SE kernels.
For SVDKL, we initialize the inducing points using the k-means algorithm on a subset of the training set.
We use 100 inducing points for the smaller datasets (\textsc{Boston, Energy}) and 1000 on the larger ones.
The method for initializing the inducing points, and the number of inducing points, is the same for the SVGP baseline model, which uses a standard ARD SE kernel.

\paragraph{CIFAR-10} We use a modified ResNet18 \citep{he2016deep} architecture as the baseline neural network architecture; the main modification is that we have added another fully-connected layer at the output to ensure that the neural network and DKL models are comparable in depth.
Therefore, instead of the standard single fully-connected layer after a global average pooling layer, we have two fully-connected layers.
While we could take the output of the global average pooling layer, this is typically very high-dimensional and thus potentially unsuitable as an input to a GP.
For most experiments, we fix the width of the last hidden layer (the final feature width) to 10, although we do consider changing that in App.~\ref{app:add_exp}.
We additionally add batchnorm layers \citep{ioffe2015batch} before the ReLU activations in the residual blocks.
For SVDKL, we use 1000 inducing points initialized with k-means on a subset of the training set.
As with UCI, the features at the input to the GP are post-activation features.
For all classification models, we use softmax activation to obtain probabilities for the cross-entropy loss, and for the SVDKL models we use 10 samples from the latent function posterior to compute the log likelihood term of the ELBO.

\paragraph{UTKFace} As with CIFAR-10, we again use a modified ResNet18 \citep{he2016deep} architecture with an additional fully-connected layer at the output.
For SVDKL, we again use 1000 inducing points.

\subsection{Implementation Details}\label{app:imp-details}
All models are optimized using ADAM \citep{kingma2014adam}.
Throughout, we try to ensure that we train each model for comparable numbers of gradient steps and learning rates.

\paragraph{Toy dataset} We train both the NN and DKL models in for 10,000 gradient steps using learning rates of 0.001. No weight decay was used. For the HMC experiments, we use a step size of 0.005, 20 leapfrog steps, and a prior variance of 1 on the network weights. We burn in for 10,000 samples, then use 1,000 iterations to sample, thinned by a factor of 10.

\paragraph{MNIST} We train all models using full batch training, i.e. a batch size of 5000. 
For the pretraining of the feature extractor, we use 96,000 gradient steps (corresponding to 160 epochs of training full MNIST with batch size 100), with an initial learning rate of 1e-3 and no weight decay.
We incorporate learning rate steps at halfway and three quarters through the training, stepping down by a factor of 10 each time.
For the neural network after pretraining and for fDKL, we use the same procedure.
For the DKL model, we first train only the variational and ARD SE parameters for 9,600 gradient steps (corresponding to 16 epochs of training full MNIST with batch size 100), with no learning rate schedule.
We then train everything jointly for 86,400 gradient steps (corresponding to 144 epochs), starting again at a learning rate of 1e-3 and decreasing the learning rate by a factor of 10 at the halfway and three quarters mark.

\paragraph{UCI} For each model, we train with an initial learning rate of 0.001. For \textsc{Boston} and \textsc{Energy}, we use a batch size of 32 and train the minibatched algorithms for a total of 400 epochs, and use a learning rate scheduler that decreases the learning rate by a factor of 10 after 200 and 300 epochs. For \textsc{Kin40K}, \textsc{Power}, and \textsc{Protein} we use a batch size of 100, training for 160 epochs with the same learning rate schedule that triggers at 80 and 120 epochs. For the full-batch methods, we ensure that they are trained for the lesser of the same number of gradient steps or 8000 gradient steps (due to limited computational budget), with the learning rate schedule set to trigger at the corresponding number of gradient steps as the batched methods. This ensures a fair comparison when claiming that the full-batch methods overfit in comparison to the stochastic versions. For the deep models, we use a weight decay of 1e-4 on the neural network weights. We do not use any pretraining for the DKL models, as we did not find it necessary for these datasets. We initialize the log noise variance to -4 for the DKL models. We train the neural network models using mean squared error loss, and use the maximum likelihood noise estimate after training to compute train and test log likelihoods.

\paragraph{CIFAR-10} We describe the details for batch size 100; for batch size 500, we ensure that we use the same number of gradient steps. We do note use weight decay as we found that it hurt test accuracy. For NN and SVDKL, we train for 160 epochs total: we decrease the learning rate from the initial 1e-3 by a factor of 10 at 80 and then 120 epochs. For pNN, we train for an additional 160 epochs in the same way (restarting the learning rate at 1e-3). For pSVDKL, we start by training with the neural network parameters fixed for 80 epochs, with learning rate decreases at 40 and 60 epochs. We then reset the learning rate to 1e-3, and train for an additional 80 epochs with learning rate decreases at 40 and 60 epochs. For the experiments with data augmentation, we use random horizontal flipping and randomly crop $32\times32\times3$ images from the original images padded up to $40\times40\times3$.

We use the same losses as the potentials for SGLD. We use the trained NNs to initialize the weights to reasonable values, and set the batch size to 100. We initialize the learning rate to 1e-3 (which we then scale down by the dataset size to account for the scale of the potential), and decay the learning rate at each epoch by a factor of $1/(1 + 0.4\times \text{epoch})$ to satisfy Robbins-Monro. For the NN, we burn in for 100 epochs, and then sample every other epoch for 100 epochs, leading to 50 samples. For SVDKL, we follow the approach in \citet{hensman2015mcmc}, and learn the variational parameters (i.e. 1000 inducing points) and GP hyperparameters with the fixed, pretrained NN weights, using the same hyperparameters as for pSVDKL. We then follow the SGLD approach we took for the NN, with 100 epochs of burn in and 100 epochs of sampling, starting with a learning rate of 1e-3.

\paragraph{UTKFace} We follow the same approach as for CIFAR-10. We list the minor differences. We use a small weight decay of 1e-4. For the SVDKL models, we initialize the log noise variance to -4. We use mean squared error loss for the NNs; however, for SGLD we use a Gaussian likelihood with log noise variance initialized to -4. For the SGLD experiments, we initialize the learning rate to 1e-5. For data augmentation, we again use random horizontal flipping as well as randomly cropping $200\times200\times3$ images from the original images padded up to $240\times240\times3$.

\section{Additional Experimental Results}\label{app:add_exp}
Here we briefly present some additional experimental results.

\subsection{Toy}\label{app:add_toy}
\begin{figure*}[t]
     \centering
     \begin{subfigure}[b]{0.32\textwidth}
         \centering
         \includegraphics[width=\textwidth]{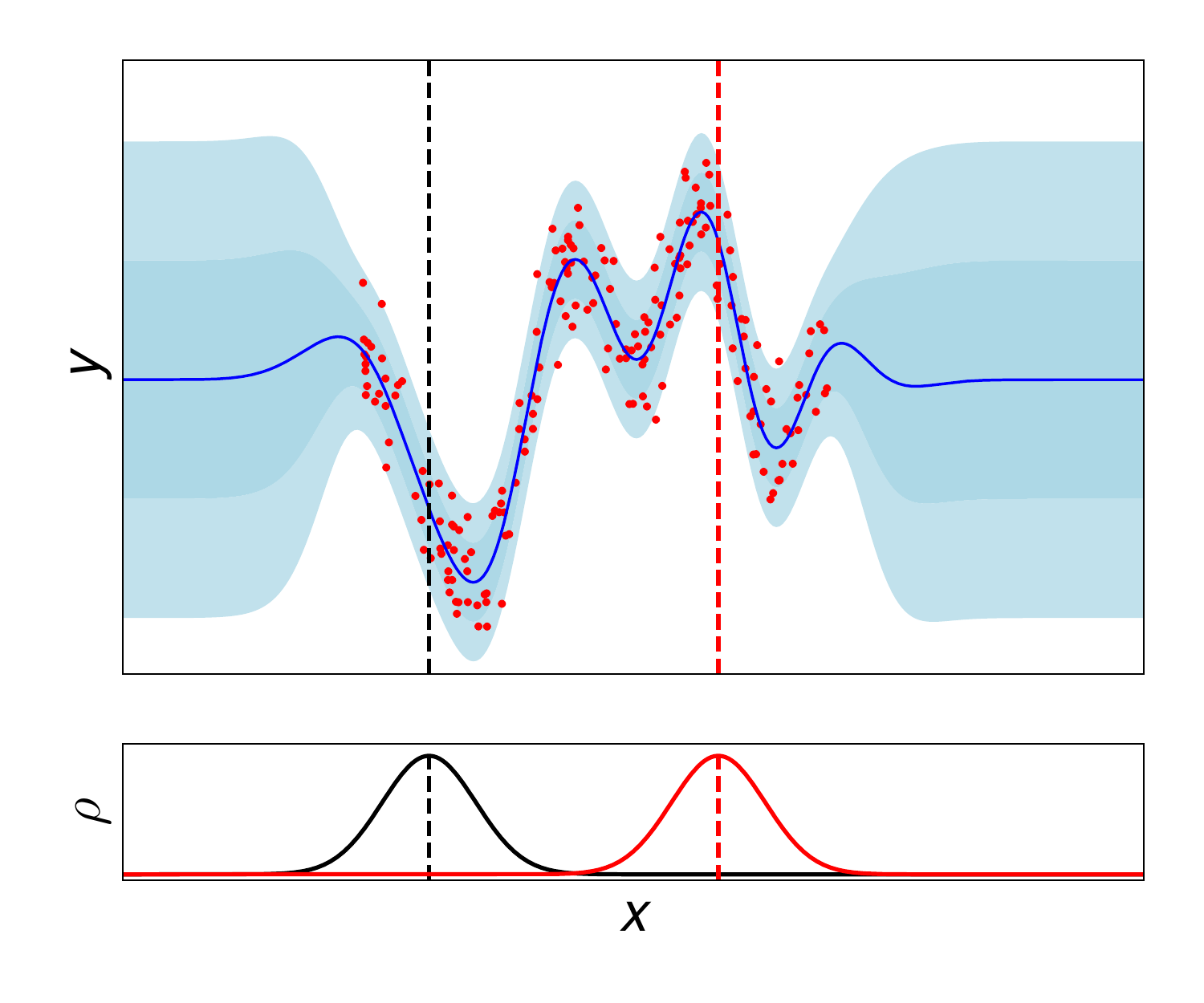}
         \caption{SE kernel. $\mathrm{LML}=-89.3$}
         \label{fig:app-RBF_200}
     \end{subfigure}
     \hfill
     \begin{subfigure}[b]{0.32\textwidth}
         \centering
         \includegraphics[width=\textwidth]{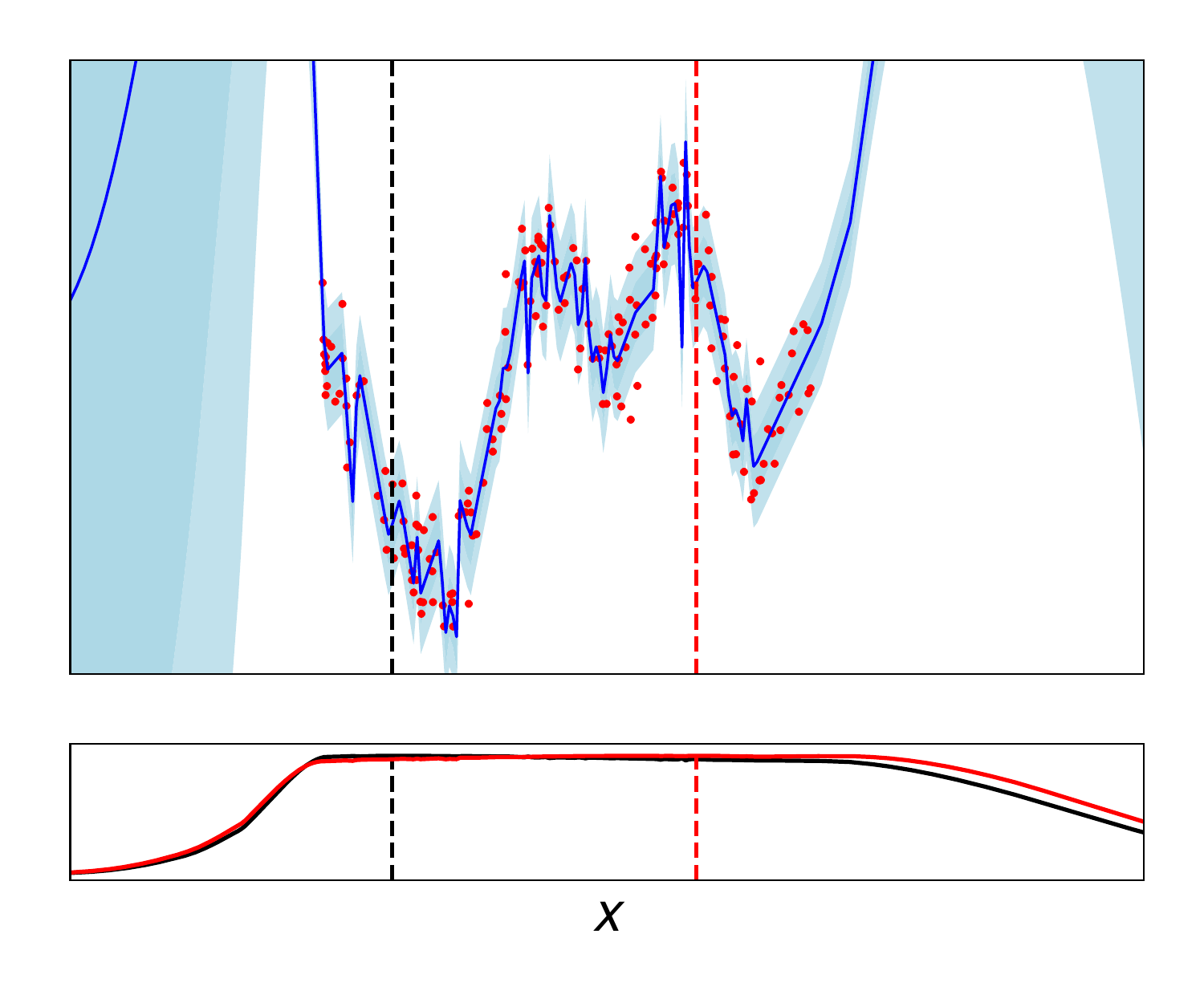}
         \caption{Example DKL. $\mathrm{LML} = -23.4$}
         \label{fig:app-DKL_200_0}
     \end{subfigure}
     \hfill
     \begin{subfigure}[b]{0.32\textwidth}
         \centering
         \includegraphics[width=\textwidth]{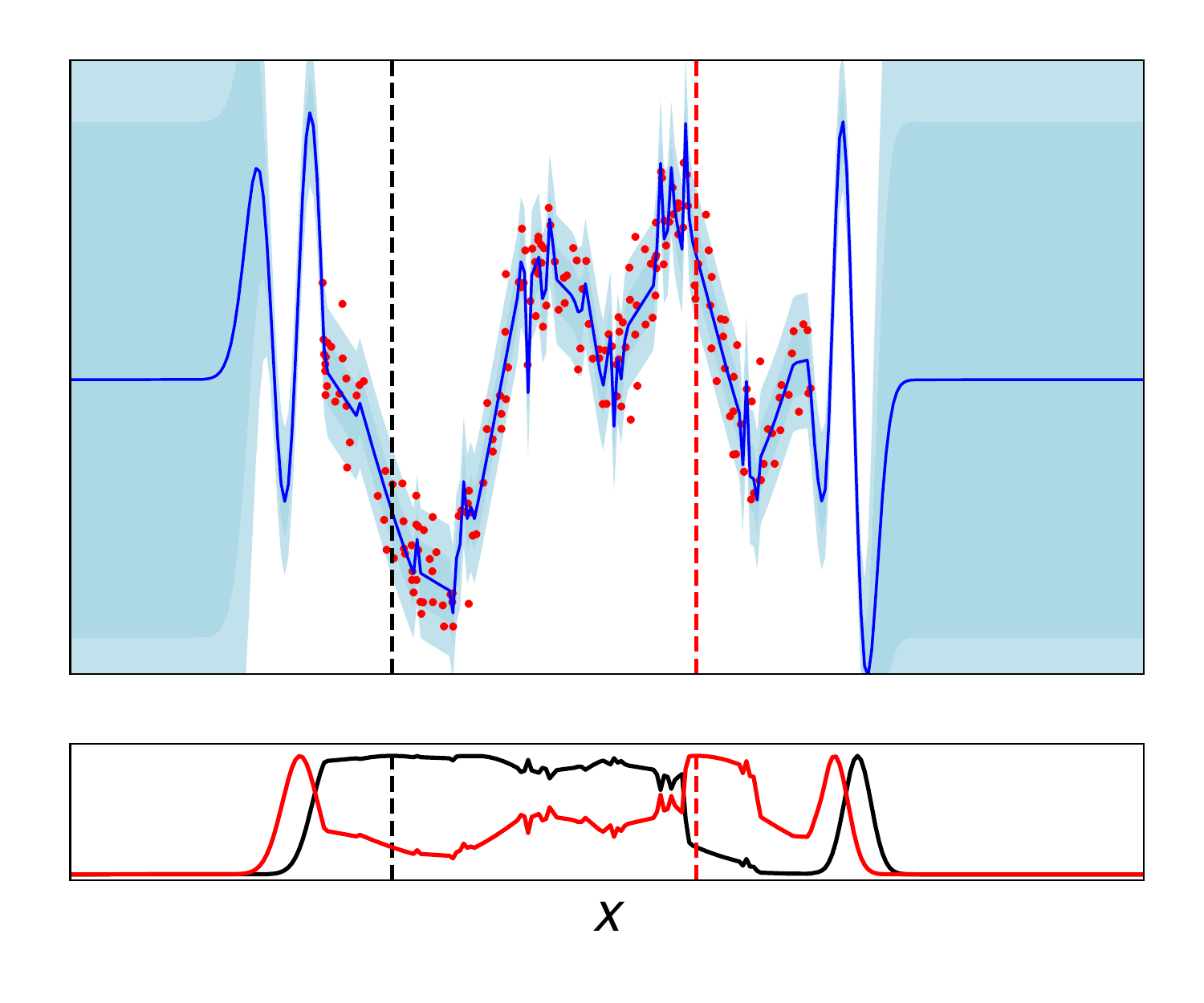}
         \caption{Example DKL. $\mathrm{LML} = -45.7$}
         \label{fig:app-DKL_200_1}
     \end{subfigure}

     \begin{subfigure}[b]{0.32\textwidth}
         \centering
         \includegraphics[width=\textwidth]{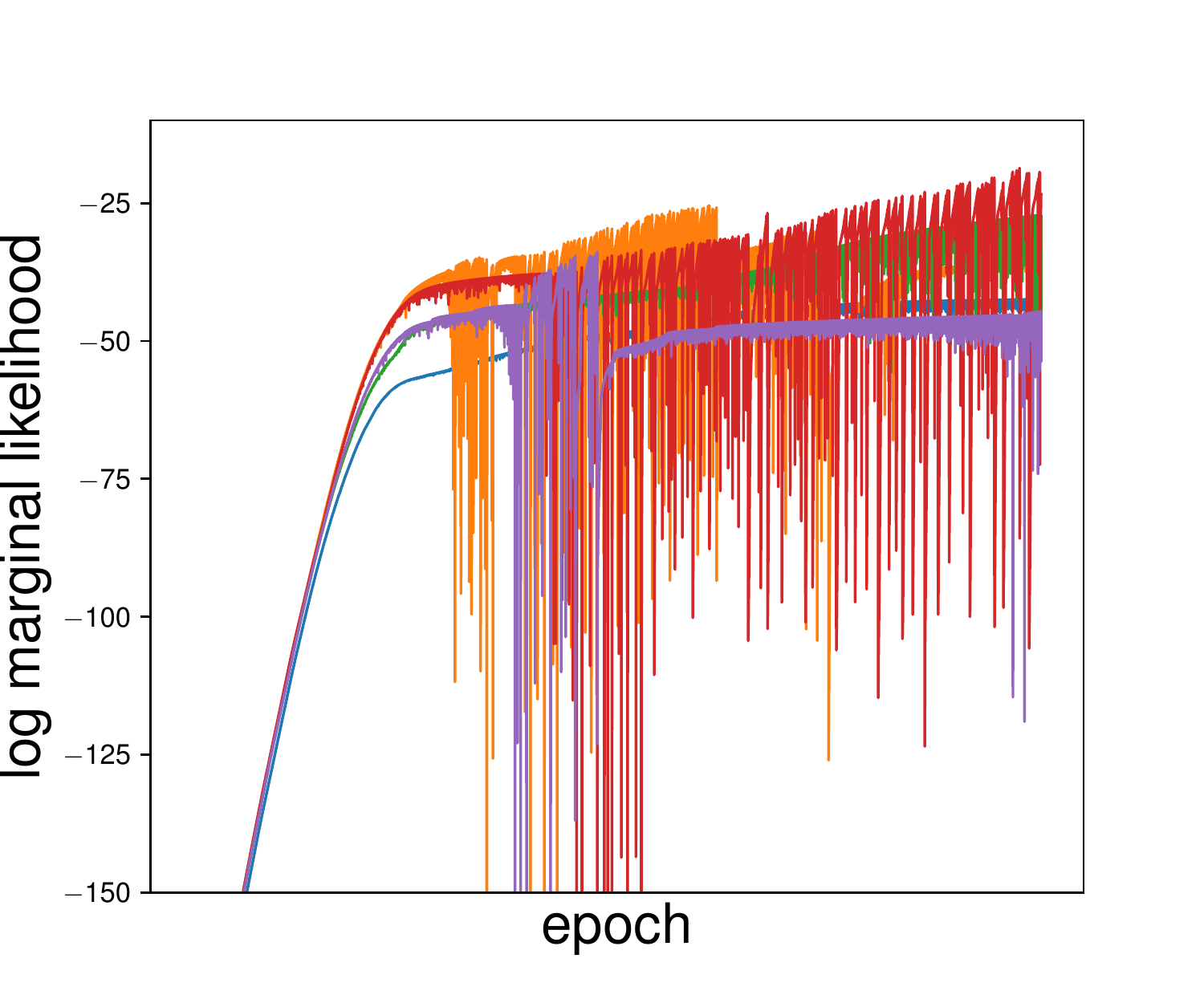}
         \caption{Training curves for 5 different initializations for DKL on the full toy dataset}
         \label{fig:app-training_curves}
     \end{subfigure}
     \hfill
     \begin{subfigure}[b]{0.32\textwidth}
         \centering
         \includegraphics[width=\textwidth]{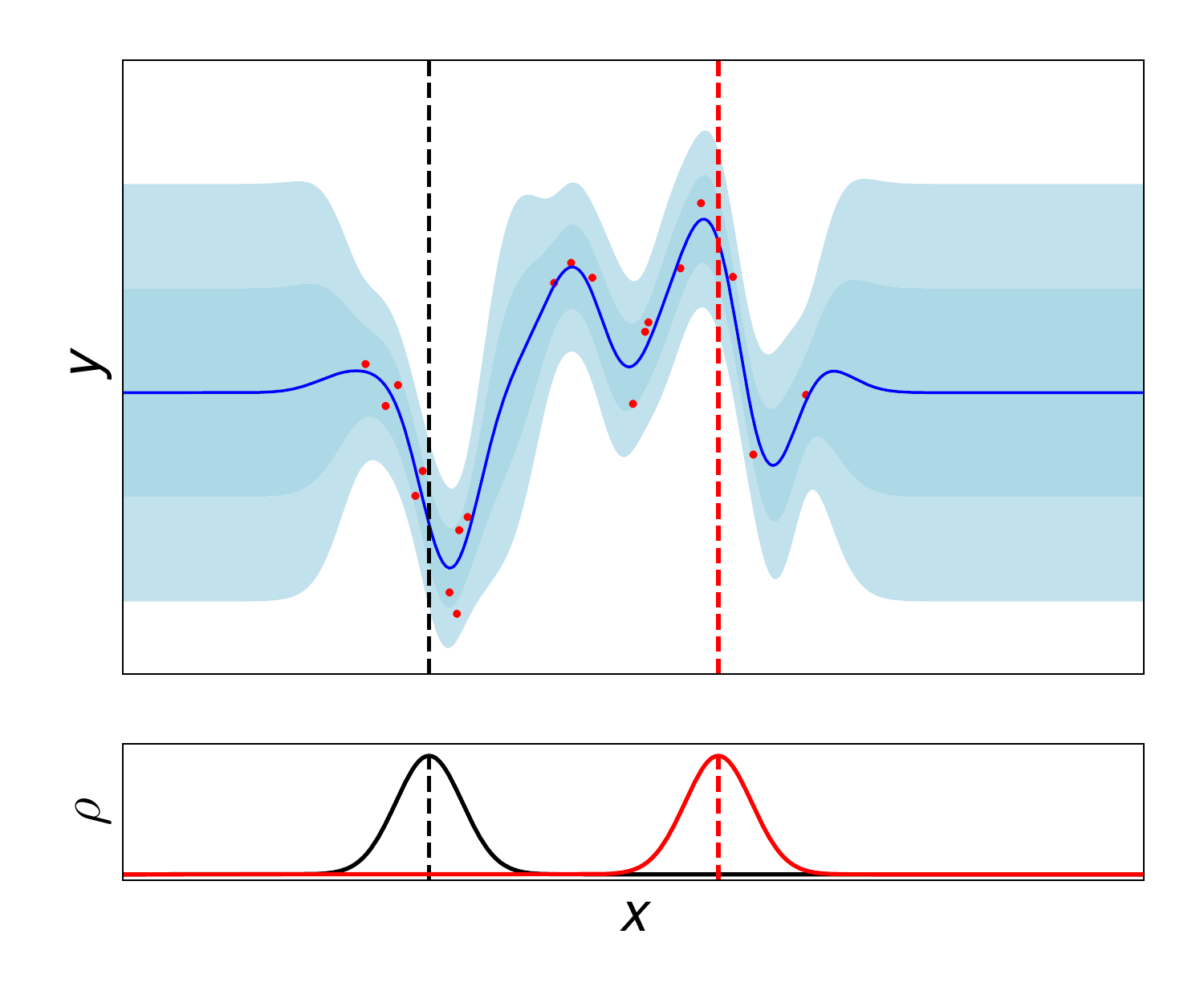}
         \caption{SE kernel. $\mathrm{LML} = -17.5$}
         \label{fig:app-RBF_20}
     \end{subfigure}
     \hfill
     \begin{subfigure}[b]{0.32\textwidth}
         \centering
         \includegraphics[width=\textwidth]{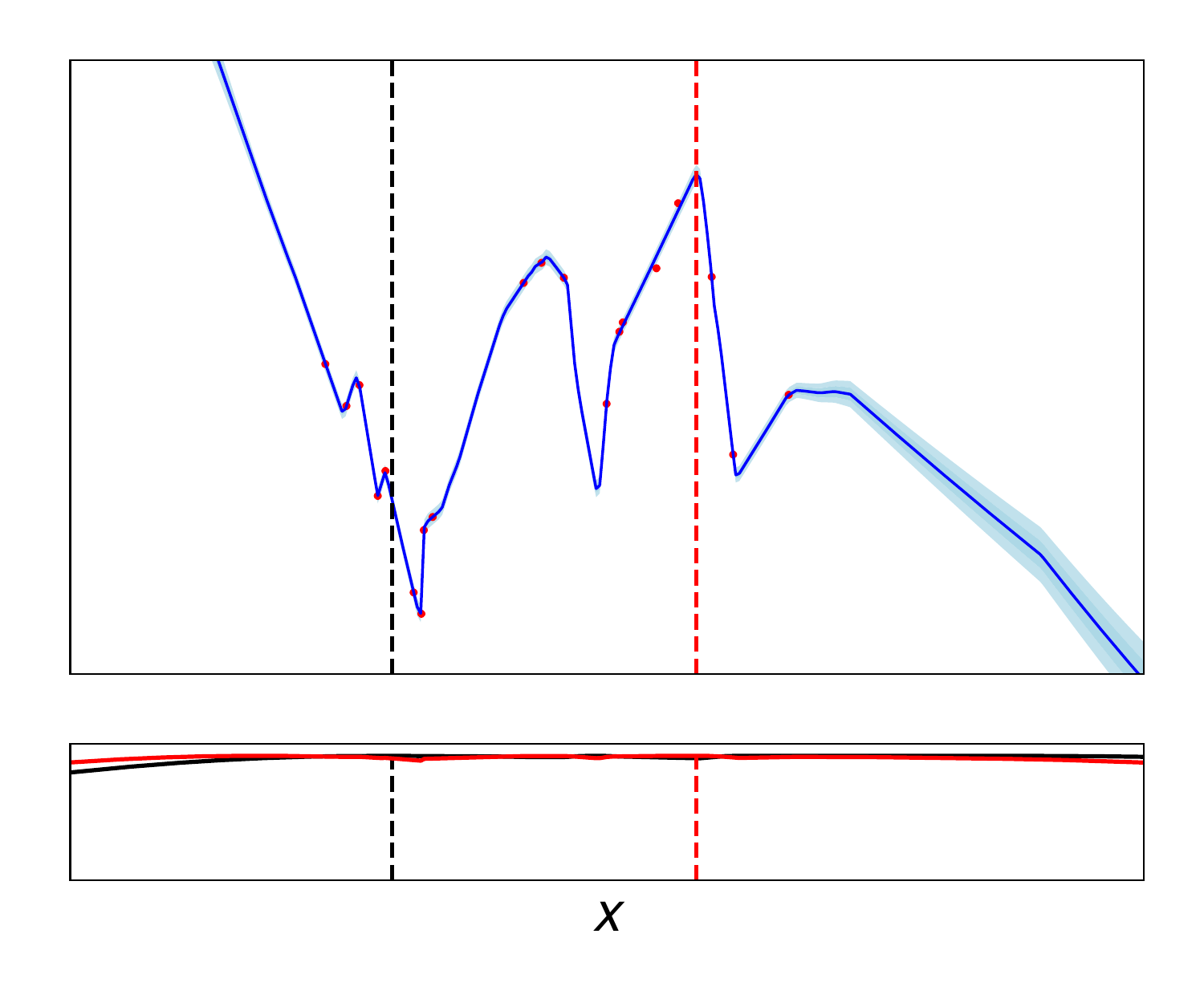}
         \caption{Example DKL. $\mathrm{LML}=28.3$}
         \label{fig:app-DKL_20}
     \end{subfigure}
        \caption{Plots of fits and training curves using standard SE kernel and DKL. Below each fit we plot two correlation functions $\rho_{x'}(x) = k(x, x')/\sigma_f^2$ induced by each kernel, where the location of $x'$ is given by the dashed vertical lines. (a)-(d) show fits and training curves for the full toy dataset introduced in \citet{snelson2006sparse}, whereas (e)-(f) show fits on the subsampled version from \citet{titsias2009variational}.}
        \label{fig:app-toy}
\end{figure*}
We show additional plots of fits and training curves for the toy problem from \citet{snelson2006sparse} in Fig.~\ref{fig:app-toy}. Below the fits, we again show the kernel correlation at two different points $x'$, marked by the vertical dashed lines. In Figure~\ref{fig:app-RBF_200}, we show the fit using the standard squared exponential kernel, followed by two fits using DKL in Figures~\ref{fig:app-DKL_200_0} and~\ref{fig:app-DKL_200_1}. In Fig.~\ref{fig:app-training_curves} we show training curves for 5 different initializations; note that unlike in the main text we use a learning rate of 1e-4 here and so require more training iterations to converge. Finally, we consider plots of fits on the subsampled version of the dataset as in \citet{titsias2009variational}; we show fits using the SE kernel and DKL in Figures~\ref{fig:app-RBF_20} and~\ref{fig:app-DKL_20}, respectively. For each fit, we also show the log marginal likelihood in the caption.

We make a few observations. First, note that different initializations can lead to very different fits and LMLs. Moreover, as predicted by our theory, the highest log marginal likelihoods are obtained when the prior attempts to correlate all the points in the input domain: Fig.~\ref{fig:app-DKL_200_0} obtains a higher LML than Fig.~\ref{fig:app-DKL_200_1}. However, instability in training often leads to worse LMLs than could be obtained (Fig.~\ref{fig:app-training_curves}). Finally, we note that the overfitting is substantially worse on the subsampled version of the dataset: we also see that the prior is more correlated than previously (Fig.~\ref{fig:app-DKL_20}).

\subsection{Changing the Feature Dimension}\label{app:fd}
We perform experiments changing the feature dimension $Q$ for the UTKFace and CIFAR-10 datasets. We present the results in Tables~\ref{tab:fd_age} and~\ref{tab:fd_cifar}, where each model name is followed by the feature space dimension. For UTKFace, it is clear that 2 neurons is not sufficient to fit the data. Beyond 2, we see only minor changes in performance. For CIFAR-10, we find that we need at least 10 neurons to fit well, but beyond 10 there are again only minor differences. We choose 10 neurons for both experiments out of convenience.

\begin{table}
    \centering
    \caption{Results for UTKFace.}\label{tab:fd_age}
    \begin{tabular}{rccccc}
\toprule 
& ELBO &  Train RMSE & Test RMSE & Train LL & Test LL \\ 
\midrule 
NN-2 & - & 0.37 $\pm$ 0.26 & 0.61 $\pm$ 0.16 & 0.58 $\pm$ 0.82 & -23.14 $\pm$ 10.94 \\ 
SVDKL-2& -0.22 $\pm$ 1.00 & 0.39 $\pm$ 0.25 & 0.61 $\pm$ 0.16 & 0.14 $\pm$ 0.68 & -3.60 $\pm$ 1.90 \\ 
pNN-2 & - & 0.36 $\pm$ 0.26 & 0.61 $\pm$ 0.16 & 0.74 $\pm$0.88 & -35.66 $\pm$ 14.00 \\ 
fSVDKL-2& -0.32 $\pm$ 4.00 & 0.45 $\pm$ 0.22 & 0.45 $\pm$0.23 & -0.26 $\pm$ 0.50 & -0.23 $\pm$0.52 \\ 
pSVDKL-2& 0.36 $\pm$3.00 & 0.36 $\pm$0.26 & 0.53 $\pm$ 0.20 & 0.47 $\pm$ 0.77 & -3.24 $\pm$ 0.76 \\ 
\midrule 
NN-5 & - & 0.04 $\pm$0.00 & 0.41 $\pm$ 0.01 & 1.72 $\pm$ 0.05 & -42.86 $\pm$ 2.79 \\ 
SVDKL-5& -0.17 $\pm$ 0.59 & 0.11 $\pm$ 0.01 & 0.47 $\pm$ 0.01 & 0.40 $\pm$ 0.11 & -1.41 $\pm$ 0.16 \\ 
pNN-5 & - & 0.04 $\pm$0.00 & 0.41 $\pm$ 0.00 & 1.79 $\pm$0.01 & -48.92 $\pm$ 0.62 \\ 
fSVDKL-5& 0.31 $\pm$ 0.48 & 0.17 $\pm$0.00 & 0.17 $\pm$0.00 & 0.38 $\pm$ 0.02 & 0.37 $\pm$0.03 \\ 
pSVDKL-5& 0.99 $\pm$0.73 & 0.04 $\pm$0.00 & 0.32 $\pm$ 0.01 & 1.18 $\pm$ 0.06 & -2.61 $\pm$ 0.26 \\ 
\midrule 
NN-10 & - & 0.04 $\pm$0.00 & 0.40 $\pm$ 0.00 & 1.81 $\pm$ 0.01 & -48.73 $\pm$ 1.64 \\ 
SVDKL-10& 0.92 $\pm$ 0.15 & 0.04 $\pm$0.00 & 0.40 $\pm$ 0.01 & 1.30 $\pm$ 0.01 & -6.88 $\pm$ 0.38 \\ 
pNN-10 & - & 0.04 $\pm$0.00 & 0.41 $\pm$ 0.00 & 1.83 $\pm$0.01 & -53.72 $\pm$ 1.71 \\ 
fSVDKL-10& 1.05 $\pm$0.02 & 0.08 $\pm$ 0.03 & 0.31 $\pm$0.07 & 1.16 $\pm$ 0.31 & -7.55 $\pm$ 3.42 \\ 
pSVDKL-10& 1.03 $\pm$ 0.07 & 0.04 $\pm$0.00 & 0.38 $\pm$ 0.02 & 1.20 $\pm$ 0.08 & -4.74 $\pm$1.35 \\ 
\midrule 
NN-20 & - & 0.04 $\pm$0.00 & 0.40 $\pm$ 0.00 & 1.78 $\pm$ 0.01 & -46.77 $\pm$ 1.72 \\ 
SVDKL-20& 0.22 $\pm$ 0.01 & 0.08 $\pm$ 0.02 & 0.43 $\pm$ 0.01 & 0.78 $\pm$ 0.22 & -3.42 $\pm$1.52 \\ 
pNN-20 & - & 0.04 $\pm$0.00 & 0.41 $\pm$ 0.00 & 1.80 $\pm$0.01 & -50.51 $\pm$ 0.47 \\ 
fSVDKL-20& 0.71 $\pm$ 0.30 & 0.12 $\pm$ 0.03 & 0.24 $\pm$0.06 & 0.83 $\pm$ 0.34 & -5.06 $\pm$ 4.46 \\ 
pSVDKL-20& 1.15 $\pm$0.10 & 0.04 $\pm$0.00 & 0.34 $\pm$ 0.03 & 1.33 $\pm$ 0.04 & -4.16 $\pm$ 0.42 \\ 
\midrule 
NN-50 & - & 0.04 $\pm$0.00 & 0.40 $\pm$ 0.00 & 1.79 $\pm$ 0.01 & -47.72 $\pm$ 0.68 \\ 
SVDKL-50& 0.92 $\pm$ 0.27 & 0.04 $\pm$0.00 & 0.40 $\pm$ 0.00 & 1.33 $\pm$ 0.03 & -7.35 $\pm$ 0.45 \\ 
pNN-50 & - & 0.04 $\pm$0.00 & 0.41 $\pm$ 0.01 & 1.81 $\pm$0.00 & -51.14 $\pm$ 1.33 \\ 
fSVDKL-50& 1.14 $\pm$ 0.31 & 0.08 $\pm$ 0.03 & 0.32 $\pm$0.06 & 1.29 $\pm$ 0.35 & -11.60 $\pm$ 4.91 \\ 
pSVDKL-50& 1.21 $\pm$0.03 & 0.04 $\pm$0.00 & 0.37 $\pm$ 0.02 & 1.37 $\pm$ 0.02 & -5.71 $\pm$0.55 \\ 
\bottomrule
    \end{tabular}
\end{table}

\begin{table}
    \centering
    \small
    \caption{Results for CIFAR-10.}\label{tab:fd_cifar}
    \begin{tabular}{rccccccc}
\toprule 
& ELBO &  Train Acc. & Test Acc. & Train LL & Test LL & Inc. Test LL & ECE \\ 
\midrule 
NN-2 & - & 0.69 $\pm$ 0.24 & 0.51 $\pm$ 0.17 & -0.81 $\pm$ 0.61 & -6.42 $\pm$ 2.23 & -6.40 $\pm$ 1.78 & 0.14 $\pm$ 0.06 \\ 
SVDKL-2& -1.38 $\pm$ 0.38 & 0.52 $\pm$ 0.17 & 0.46 $\pm$ 0.15 & -1.34 $\pm$ 0.39 & -1.57 $\pm$0.30 & -2.61 $\pm$0.13 & 0.04 $\pm$0.02 \\ 
pNN-2 & - & 0.70 $\pm$0.24 & 0.53 $\pm$0.17 & -0.77 $\pm$0.63 & -5.25 $\pm$ 1.36 & -7.28 $\pm$ 2.03 & 0.15 $\pm$ 0.06 \\ 
fSVDKL-2& -0.85 $\pm$ 0.59 & 0.69 $\pm$ 0.24 & 0.51 $\pm$ 0.17 & -0.81 $\pm$ 0.61 & -1.83 $\pm$ 0.20 & -4.35 $\pm$ 0.84 & 0.14 $\pm$ 0.06 \\ 
pSVDKL-2& -0.78 $\pm$ 0.62 & 0.70 $\pm$0.24 & 0.53 $\pm$0.17 & -0.78 $\pm$ 0.62 & -1.76 $\pm$ 0.23 & -4.51 $\pm$ 0.90 & 0.13 $\pm$ 0.05 \\ 
\midrule 
NN-5 & - & 0.70 $\pm$0.25 & 0.55 $\pm$0.18 & -0.77 $\pm$0.63 & -3.03 $\pm$ 0.58 & -7.00 $\pm$ 1.92 & 0.13 $\pm$ 0.05 \\ 
SVDKL-5& -1.66 $\pm$ 0.26 & 0.40 $\pm$ 0.12 & 0.39 $\pm$ 0.12 & -1.63 $\pm$ 0.28 & -1.68 $\pm$ 0.25 & -2.30 $\pm$ 0.00 & 0.02 $\pm$ 0.01 \\ 
pNN-5 & - & 0.70 $\pm$ 0.25 & 0.55 $\pm$ 0.18 & -0.77 $\pm$ 0.63 & -3.22 $\pm$ 0.68 & -7.34 $\pm$ 2.06 & 0.13 $\pm$ 0.05 \\ 
fSVDKL-5& -0.79 $\pm$ 0.62 & 0.70 $\pm$ 0.24 & 0.55 $\pm$ 0.18 & -0.77 $\pm$ 0.63 & -1.61 $\pm$ 0.29 & -4.25 $\pm$ 0.80 & 0.09 $\pm$ 0.04 \\ 
pSVDKL-5& -0.77 $\pm$ 0.62 & 0.70 $\pm$ 0.24 & 0.55 $\pm$ 0.19 & -0.77 $\pm$ 0.63 & -1.64 $\pm$ 0.27 & -4.66 $\pm$ 0.96 & 0.11 $\pm$ 0.04 \\ 
\midrule 
NN-10 & - & 1.00 $\pm$ 0.00 & 0.79 $\pm$ 0.00 & -0.00 $\pm$ 0.00 & -2.05 $\pm$ 0.03 & -8.87 $\pm$ 0.10 & 0.18 $\pm$ 0.00 \\ 
SVDKL-10& -0.76 $\pm$ 0.28 & 0.76 $\pm$ 0.09 & 0.63 $\pm$ 0.03 & -0.71 $\pm$ 0.28 & -1.37 $\pm$ 0.10 & -3.38 $\pm$ 0.77 & 0.10 $\pm$ 0.05 \\ 
pNN-10 & - & 1.00 $\pm$ 0.00 & 0.79 $\pm$ 0.00 & -0.00 $\pm$ 0.00 & -2.30 $\pm$ 0.11 & -9.48 $\pm$ 0.30 & 0.19 $\pm$ 0.00 \\ 
fSVDKL-10& -0.02 $\pm$ 0.00 & 1.00 $\pm$ 0.00 & 0.78 $\pm$ 0.00 & -0.01 $\pm$ 0.00 & -1.14 $\pm$ 0.00 & -5.10 $\pm$ 0.01 & 0.14 $\pm$ 0.00 \\ 
pSVDKL-10& -0.00 $\pm$ 0.00 & 1.00 $\pm$ 0.00 & 0.79 $\pm$ 0.00 & -0.00 $\pm$ 0.00 & -1.13 $\pm$ 0.01 & -5.24 $\pm$ 0.05 & 0.15 $\pm$ 0.00 \\ 
\midrule 
NN-20 & - & 1.00 $\pm$ 0.00 & 0.79 $\pm$ 0.00 & -0.00 $\pm$ 0.00 & -2.06 $\pm$ 0.02 & -8.91 $\pm$ 0.14 & 0.18 $\pm$ 0.00 \\ 
SVDKL-20& -0.30 $\pm$ 0.20 & 0.91 $\pm$ 0.06 & 0.70 $\pm$ 0.02 & -0.26 $\pm$ 0.19 & -1.46 $\pm$ 0.16 & -4.72 $\pm$ 0.88 & 0.16 $\pm$ 0.04 \\ 
pNN-20 & - & 1.00 $\pm$ 0.00 & 0.79 $\pm$ 0.00 & -0.00 $\pm$ 0.00 & -2.24 $\pm$ 0.01 & -9.37 $\pm$ 0.08 & 0.19 $\pm$ 0.00 \\ 
fSVDKL-20& -0.02 $\pm$ 0.00 & 1.00 $\pm$ 0.00 & 0.79 $\pm$ 0.00 & -0.01 $\pm$ 0.00 & -1.14 $\pm$ 0.02 & -5.16 $\pm$ 0.10 & 0.13 $\pm$ 0.00 \\ 
pSVDKL-20& -0.00 $\pm$ 0.00 & 1.00 $\pm$ 0.00 & 0.79 $\pm$ 0.00 & -0.00 $\pm$ 0.00 & -1.09 $\pm$ 0.01 & -5.12 $\pm$ 0.05 & 0.15 $\pm$ 0.00 \\ 
\midrule 
NN-50 & - & 1.00 $\pm$ 0.00 & 0.79 $\pm$ 0.00 & -0.00 $\pm$ 0.00 & -2.18 $\pm$ 0.01 & -9.23 $\pm$ 0.04 & 0.19 $\pm$ 0.00 \\ 
SVDKL-50& -2.30 $\pm$ 0.00 & 0.10 $\pm$ 0.00 & 0.10 $\pm$ 0.00 & -2.30 $\pm$ 0.00 & -2.30 $\pm$ 0.00 & -2.30 $\pm$ 0.00 & 0.00 $\pm$ 0.00 \\ 
pNN-50 & - & 1.00 $\pm$ 0.00 & 0.79 $\pm$ 0.00 & -0.00 $\pm$ 0.00 & -2.38 $\pm$ 0.06 & -9.73 $\pm$ 0.15 & 0.19 $\pm$ 0.00 \\ 
fSVDKL-50& -0.02 $\pm$ 0.00 & 1.00 $\pm$ 0.00 & 0.79 $\pm$ 0.00 & -0.00 $\pm$ 0.00 & -1.22 $\pm$ 0.01 & -5.48 $\pm$ 0.05 & 0.14 $\pm$ 0.00 \\ 
pSVDKL-50& -0.00 $\pm$ 0.00 & 1.00 $\pm$ 0.00 & 0.79 $\pm$ 0.00 & -0.00 $\pm$ 0.00 & -1.11 $\pm$ 0.01 & -5.13 $\pm$ 0.04 & 0.15 $\pm$ 0.00 \\ 
\bottomrule
    \end{tabular}
\end{table}

\newpage
\section{Tabulated UCI Results}\label{app:uci_tables}
Here we tabulate the results for the UCI datasets.

\begin{table}[h]
    \centering
    \caption{Results for \textsc{Boston}. We report means plus or minus one standard error averaged over the splits.}\label{tab:boston}
    \begin{tabular}{rccccc}
\toprule 
& loss & train RMSE & test RMSE & train LL & test LL \\ 
\midrule 
SVGP & 1.66 $\pm$ 0.06 & 0.39 $\pm$ 0.01 & 0.37 $\pm$ 0.02 & -0.34 $\pm$ 0.01 & -0.33 $\pm$ 0.05 \\ 
fNN & 0.01 $\pm$ 0.00 & 0.02 $\pm$ 0.00 & 0.39 $\pm$ 0.03 & 2.28 $\pm$ 0.03 & -132.41 $\pm$ 22.39 \\ 
sNN & 0.01 $\pm$ 0.00 & 0.10 $\pm$ 0.00 & 0.34 $\pm$ 0.02 & 0.93 $\pm$ 0.02 & -5.61 $\pm$ 1.03 \\ 
DKL & -2.47 $\pm$ 0.00 & 0.00 $\pm$ 0.00 & 0.41 $\pm$ 0.02 & 2.72 $\pm$ 0.00 & -67.55 $\pm$ 3.97 \\ 
SVDKL & -0.47 $\pm$ 0.01 & 0.13 $\pm$ 0.00 & 0.35 $\pm$ 0.02 & 0.57 $\pm$ 0.01 & -1.12 $\pm$ 0.24 \\ 
      \bottomrule 
    \end{tabular}
\end{table}

\begin{table}[h]
    \centering
    \caption{Results for \textsc{Energy}.}\label{tab:energy}
    \begin{tabular}{rccccc}
\toprule 
& loss & train RMSE & test RMSE & train LL & test LL \\ 
\midrule 
SVGP & 0.07 $\pm$ 0.01 & 0.19 $\pm$ 0.00 & 0.20 $\pm$ 0.00 & 0.19 $\pm$ 0.01 & 0.15 $\pm$ 0.02 \\ 
fNN & 0.00 $\pm$ 0.00 & 0.02 $\pm$ 0.00 & 0.04 $\pm$ 0.00 & 2.55 $\pm$ 0.02 & -0.04 $\pm$ 0.38 \\ 
sNN & 0.00 $\pm$ 0.00 & 0.02 $\pm$ 0.00 & 0.05 $\pm$ 0.00 & 2.31 $\pm$ 0.02 & 0.62 $\pm$ 0.19 \\ 
DKL & -3.01 $\pm$ 0.02 & 0.01 $\pm$ 0.00 & 0.05 $\pm$ 0.00 & 3.15 $\pm$ 0.02 & -2.63 $\pm$ 0.49 \\ 
SVDKL & -1.21 $\pm$ 0.00 & 0.03 $\pm$ 0.00 & 0.04 $\pm$ 0.00 & 1.26 $\pm$ 0.00 & 1.22 $\pm$ 0.01 \\ 
      \bottomrule 
    \end{tabular}
\end{table}

\begin{table}[h]
    \centering
    \caption{Results for \textsc{Kin40K}.}\label{tab:kin40k}
    \begin{tabular}{rccccc}
\toprule 
& loss & train RMSE & test RMSE & train LL & test LL \\ 
\midrule 
SVGP & -0.14 $\pm$ 0.00 & 0.16 $\pm$ 0.00 & 0.17 $\pm$ 0.00 & 0.36 $\pm$ 0.00 & 0.33 $\pm$ 0.00 \\ 
fNN & 0.01 $\pm$ 0.00 & 0.03 $\pm$ 0.00 & 0.05 $\pm$ 0.00 & 2.18 $\pm$ 0.00 & 1.17 $\pm$ 0.02 \\ 
sNN & 0.01 $\pm$ 0.00 & 0.03 $\pm$ 0.00 & 0.05 $\pm$ 0.00 & 2.03 $\pm$ 0.00 & 1.51 $\pm$ 0.01 \\ 
VDKL & -1.41 $\pm$ 0.00 & 0.02 $\pm$ 0.00 & 0.05 $\pm$ 0.00 & 1.44 $\pm$ 0.00 & 1.33 $\pm$ 0.00 \\ 
SVDKL & -2.62 $\pm$ 0.00 & 0.01 $\pm$ 0.00 & 0.03 $\pm$ 0.00 & 2.68 $\pm$ 0.00 & 1.73 $\pm$ 0.02 \\ 
      \bottomrule 
    \end{tabular}
\end{table}

\begin{table}[h]
    \centering
    \caption{Results for \textsc{Power}.}\label{tab:power}
    \begin{tabular}{rccccc}
\toprule 
& loss & train RMSE & test RMSE & train LL & test LL \\ 
\midrule 
SVGP & -0.01 $\pm$ 0.00 & 0.23 $\pm$ 0.00 & 0.23 $\pm$ 0.00 & 0.06 $\pm$ 0.00 & 0.07 $\pm$ 0.01 \\ 
fNN & 0.04 $\pm$ 0.00 & 0.17 $\pm$ 0.00 & 0.21 $\pm$ 0.00 & 0.37 $\pm$ 0.00 & 0.11 $\pm$ 0.02 \\ 
sNN & 0.05 $\pm$ 0.00 & 0.21 $\pm$ 0.00 & 0.22 $\pm$ 0.00 & 0.14 $\pm$ 0.00 & 0.11 $\pm$ 0.01 \\ 
VDKL & -0.57 $\pm$ 0.00 & 0.13 $\pm$ 0.00 & 0.21 $\pm$ 0.00 & 0.62 $\pm$ 0.00 & -0.02 $\pm$ 0.02 \\ 
SVDKL & -0.25 $\pm$ 0.00 & 0.18 $\pm$ 0.00 & 0.21 $\pm$ 0.00 & 0.28 $\pm$ 0.00 & 0.16 $\pm$ 0.01 \\ 
      \bottomrule 
    \end{tabular}
\end{table}

\begin{table}[h]
    \centering
    \caption{Results for \textsc{Protein}.}\label{tab:protein}
    \begin{tabular}{rccccc}
\toprule 
& loss & train RMSE & test RMSE & train LL & test LL \\ 
\midrule 
SVGP & 1.06 $\pm$ 0.00 & 0.64 $\pm$ 0.00 & 0.66 $\pm$ 0.00 & -0.98 $\pm$ 0.00 & -1.00 $\pm$ 0.00 \\ 
fNN & 0.19 $\pm$ 0.00 & 0.39 $\pm$ 0.00 & 0.58 $\pm$ 0.00 & -0.46 $\pm$ 0.00 & -1.09 $\pm$ 0.01 \\ 
sNN & 0.17 $\pm$ 0.00 & 0.35 $\pm$ 0.00 & 0.55 $\pm$ 0.00 & -0.36 $\pm$ 0.00 & -1.14 $\pm$ 0.01 \\ 
VDKL & 0.32 $\pm$ 0.01 & 0.30 $\pm$ 0.00 & 0.59 $\pm$ 0.00 & -0.23 $\pm$ 0.01 & -1.86 $\pm$ 0.01 \\ 
SVDKL & 0.35 $\pm$ 0.00 & 0.31 $\pm$ 0.00 & 0.57 $\pm$ 0.00 & -0.26 $\pm$ 0.00 & -1.29 $\pm$ 0.01 \\ 

      \bottomrule 
    \end{tabular}
\end{table}

\end{document}